\documentclass{article}
\usepackage{amsthm}
     \PassOptionsToPackage{numbers, compress}{natbib}

\usepackage[preprint]{neurips_2026}

\usepackage[utf8]{inputenc} 
\usepackage[T1]{fontenc}    
\usepackage{hyperref}       
\usepackage{url}            
\usepackage{booktabs}       
\usepackage{amsfonts}       
\usepackage{nicefrac}       
\usepackage{microtype}      
\usepackage{xcolor}         
\usepackage{caption}
\usepackage{amsmath,amssymb,amsfonts}
\usepackage{mathtools}
\usepackage{bm}
\usepackage{physics}
\usepackage{graphicx}
\usepackage{subfigure}
\usepackage{algorithm}
\usepackage{algpseudocode}

\newtheorem{proposition}{Proposition}
\newtheorem{corollary}{Corollary}

\title{Entropic Auto-Encoding via Implicit Free-Energy Minimization}

\author{%
  Hazhir Aliahmadi$^{1}$\thanks{These authors contributed equally to this work.} \And
  Irina Babayan$^{1}$\footnotemark[1] \And
  Greg van Anders$^{1}$ \\
  $^{1}$Department of Physics, Engineering Physics \& Astronomy,
  Queen's University, Kingston, ON, Canada \\
  \texttt{\{hazhir.aliahmadi, babayan.irina, gva\}@queensu.ca}
}

\begin{document}

\maketitle

\begin{abstract}
  Despite their ubiquity, variational autoencoders (VAEs) inherently suffer from posterior collapse, a failure mode in which latent variables are effectively ignored. This failure arises because explicit prior imposition drives optimization toward loss landscape regions corresponding to uninformative latent representations. Here, we introduce Entropic Autoencoders (EAEs), a framework in which reconstruction loss is the only explicit objective, and entropy generates the latent variables' prior implicitly through a free energy-minimizing ensemble of encoders. This ensemble biases learning toward high-volume regions of near-optimal solutions, while decoder updates direct the search trajectories toward informative latent representations. We demonstrate that EAEs mitigate posterior collapse by learning non-Gaussian, multimodal latent distributions that yield diverse, data-consistent generations and preserve different forms of underlying structure in the data. As a proof-of-concept, we show that an EAE captures a superposition of the known low-dimensional dynamics of a reaction-diffusion process. Then, we show that an EAE identifies implicit categorical distinctions in MNIST latent representations, and displays a hierarchical understanding of facial structure on the CelebA dataset, from an ``all-human'' face to individual-dependent features.
\end{abstract}

\section{Introduction}

Dimensionality reduction has evolved from linear variance-maximizing projections such as principal component analysis (PCA) \citep{hotellingAnalysis1933} to nonlinear encoder--decoder architectures such as autoencoders \citep{hintonReducingDimensionalityData2006}, and even further to probabilistic latent-variable models such as variational autoencoders (VAEs) \citep{kingmaAutoEncoding2022b}, wherein representations are learned by optimizing variational objectives for otherwise intractable posteriors. This added flexibility, however, introduces posterior collapse: a failure mode in which the variational posterior becomes indistinguishable from the prior, the decoder ignores the latent variables, and the learned representation becomes uninformative \citep{wangPosterior2021,lucasDon2019,bowmanGenerating2016a,chenVariational2016,oordNeural2017,oordConditional2016}. 

Posterior collapse is often attributed either to Kullback-Leibler divergence (KL) pressure, which biases the variational posterior toward the prior \citep{bowmanGenerating2016a,kingmaImproved2016,sonderbyLadder2016}, or to expressive decoders, which can fit the data while ignoring the latent variables \citep{semeniutaHybrid2017,petitPreventing2021,chenVariational2016}. More broadly, both mechanisms modify the optimization landscape. Under ELBO optimization, such landscape effects can stabilize low-loss basins that favor reconstruction over inference \citep{zhaoInfoVAE2019,songScaleVAE2024}, reinforcing ill-conditioned stationary points \citep{lucasDon2019} and latent non-identifiability \citep{wangPosterior2021}, and producing  ``accurate'' decoders that are insensitive to latent codes.

In non-convex learning, Bayesian formulations can be viewed as loss basin selection, with posterior parameter concentration shaped both by objective values and by the distribution of prior mass across parameter space \citep{mackayBayesian1992, nealBayesianLearningNeural1996}. Prior choice is thus central to these formulations, but generic prior specification is difficult because informative latent structure is typically problem-dependent. A classical guiding principle is maximum entropy \citep{jaynes1}, which selects the least-committal distribution consistent with specified constraints and symmetries, separating explicit modeling assumptions from incidental structure \citep{jaynesPrior1968a}. Recent statistical physics-inspired approaches extend this strategy further by making the “effective prior” emerge implicitly from landscape geometry and collective variable dynamics, rather than being hand-specified \citep{babayanSufficient2025}.

The EAE approach we present here lets effective priors emerge from a canonical (Gibbs) ensemble over encoder parameters at finite temperature. This ensemble enables implicit free energy minimization, where the reconstruction loss defines the energy and the ensemble captures the effect of entropy. Then, the decoder guides the ensemble-induced distribution toward high-volume and reconstruction-relevant regions of the loss landscape, avoiding posterior collapse without explicit KL or entropy regularization.

\paragraph{Contributions.}
The main contributions of this paper are as follows:

\begin{itemize}
    \item We introduce \emph{Entropic Autoencoders (EAEs)}, a framework in which the \emph{only explicit objective} is reconstruction loss, while entropy affects training implicitly through an ensemble of encoder configurations. We provide a theoretical interpretation of EAEs as an implicit free-energy--based framework where the encoder ensemble induces a latent distribution, and the decoder is then updated by minimizing reconstruction error in expectation over this distribution, avoiding posterior collapse by selecting reconstruction-consistent latent representations that are supported by many encoder configurations.

    \item We show that EAEs recover intrinsically low-dimensional and dynamically meaningful representations, rather than merely compressing the data. In a reaction–diffusion system with known structure, the learned latent variables capture a superposition of admissible dynamics, demonstrating that the representation reflects the underlying generative mechanisms of the data.

    \item We empirically demonstrate that EAEs mitigate posterior collapse while preserving high-quality reconstructions and generative performance. In contrast to VAEs with Gaussian priors, which typically yield unimodal and often collapsed latent distributions, EAE learns non-Gaussian, multimodal latent representations, leading to diverse and data-consistent generations on standard benchmark datasets, including MNIST and Frey Faces.

    \item On the CelebA dataset, we analyze EAE dynamics across basins of the optimization landscape. We show that non-informative basins produce a generic \emph{all-human} face— an identity-agnostic prototype that captures population-level facial structure— while near-optimal ensemble exploration drives the system toward informative basins that generate diverse and realistic faces consistent with the data distribution. These generated-image features indicate that the latent space captures a reduced-dimensional representation of the dataset, which preserves the essential structure required for meaningful generation.
  
\end{itemize}
\section{Background and Related Work \label{sec:bg}}

\paragraph{Notation.}
Let $Y=\{y_n\}_{n=1}^N$ denote a dataset of observations with $y_n \in \mathbb{R}^{n_y}$. We associate to each observation a latent variable $z_n \in \mathbb{R}^{n_z}$ obtained through an encoder map $E_\phi:\mathbb{R}^{n_y}\to\mathbb{R}^{n_z}$, parameterized by $\phi \in \mathbb{R}^{n_\phi}$, such that $z = E_\phi(y)$. Reconstructions are produced by a decoder map $D_\vartheta:\mathbb{R}^{n_z}\to\mathbb{R}^{n_y}$, parameterized by $\vartheta \in \mathbb{R}^{n_\vartheta}$, yielding $\hat{y} = D_\vartheta(z)$. 

\paragraph{From deterministic to variational autoencoders.}
A deterministic autoencoder \citep{hintonReducingDimensionalityData2006} learns an encoder--decoder pair $(E_\phi,D_\vartheta)$ by minimizing the reconstruction objective $L_{\mathrm{rec}}(\phi,\vartheta):=\sum_{n=1}^N \ell\!\left(y_n,D_\vartheta(E_\phi(y_n))\right)$, where $\ell(y,\hat y)\ge 0$ is a pointwise reconstruction loss, e.g., $\ell(y,\hat y)=\tfrac12\|y-\hat y\|^2$, and $z_n=E_\phi(y_n)$ is a point estimate of the latent representation. This formulation provides no mechanism to model uncertainty or to generate samples beyond the training data. A probabilistic extension replaces the deterministic code with an approximate posterior $q_\phi(z|y)$, introduces a prior $p(z)$ over latent variables, and interprets the decoder as a likelihood model $p_\vartheta(y|z)$. The reconstruction term then becomes an expected negative log-likelihood, $\mathbb{E}_{q_\phi(z|y_n)}[-\log p_\vartheta(y_n|z)]$, while the latent distribution is regularized by constraining its KL divergence from the prior, $D_{\mathrm{KL}}(q_\phi(z|y_n)\|p(z))\le \epsilon$. Writing the corresponding Lagrangian gives the weighted variational loss (see \citep{higginsBetaVAE2017} for details)
\[
L(\phi,\vartheta;\lambda)
=
\sum_{n=1}^N
\mathbb{E}_{q_\phi(z|y_n)}
\!\left[-\log p_\vartheta(y_n|z)\right]
+
\lambda
\sum_{n=1}^N
D_{\mathrm{KL}}\!\left(q_\phi(z|y_n)\|p(z)\right),
\qquad
\lambda \ge 0 .
\]
where $\lambda$ is the nonnegative Lagrange multiplier controlling the strength of prior matching. The case $\lambda=1$ recovers the standard VAE loss \citep{kingmaAutoEncoding2022b}, while $\lambda\neq1$ can improve representations by altering the reconstruction--regularization trade-off \citep{higginsBetaVAE2017}.

\paragraph{Posterior collapse.}
A well-known failure mode of VAEs $(\lambda = 1)$ is \emph{posterior collapse}, in which the encoder distribution becomes indistinguishable from the prior, $q_{\phi}(z\mid y_n)\approx p(z)$ for all $y_n\in Y$. In this regime, the latent variable $z$ becomes effectively independent of the input $y_n$ and captures little information about it. Equivalently, the KL term in the ELBO vanishes, and the model relies almost entirely on the decoder to reconstruct the data. Ref. \citep{alemiFixingBrokenELBO2018} showed that reducing the weight of the KL term, $\lambda<1$, can mitigate posterior collapse. More generally, Ref. \citep{wenzelHowGoodBayes2020} interpret the Lagrange multiplier as a \emph{temperature}, $\lambda := T$, and demonstrate that predictive performance can improve under a cold posterior ($T<1$). This perspective is consistent with approaches such as KL annealing \citep{bowmanGenerating2016a,fuCyclical2019,shaoControlVAEControllableVariational2020}, which dynamically control the influence of the KL term during training to reduce the posterior collapse. 

Another line of work attributes posterior collapse primarily to the choice of prior under a fixed training temperature, usually $T=1$, and aims to modify the prior to better match a given fixed-temperature regime. This includes employing more flexible explicit priors, such as hyperspherical \citep{davidsonHyperspherical2022}, uniform \citep{zhaoUnsupervised2018,oordNeural2017}, and mixture priors \citep{chienAmortizedMixturePrior2020a,klushynLearning2019}. 
A related strategy leaves the explicit prior unchanged and instead regularizes the objective, reshaping the optimization landscape so that informative latent representations are favored under the chosen prior and temperature \citep{zhengUnderstanding2019,maMAE2018,zhaoInfoVAE2019}.
By contrast, Ref.~\citep{babayanSufficient2025} recently showed that an effective prior over collective variables can emerge implicitly, without explicit regularization, from a canonical ensemble over model parameters via fixed-temperature free-energy minimization. The idea of an emergent effective prior highlights a duality between prior choice and temperature: one may either fix the prior (e.g., Gaussian) and adjust the temperature, or fix the temperature (e.g., $T=1$) and modify the prior. Following this insight, EAEs perform iterative fixed-temperature free-energy minimization using reconstruction loss as the only explicit objective, with regularization arising implicitly from the canonical ensemble of encoders.

From a loss-landscape geometry perspective, well-generalizing solutions often lie in broad, flat regions of the landscape rather than in sharp minima \citep{FantasticGeneralizationMeasures2019,chaudhariEntropySGDBiasingGradient2019c,petzkaRelativeFlatnessGeneralization2021}, but posterior collapse indicates the existence of degenerate, uninformative loss-landscape flatness, where the likelihood becomes insensitive to some latent directions in an ill-conditioned region and renders those variables effectively non-identifiable \citep{lucasDon2019,wangPosterior2021}. More specifically, collapsed regions of encoder-parameter space may be broad because many resulting latent variable outcomes are equally irrelevant, rather than because they encode robust, useful structure. Architectural remedies to this latent variable insensitivity weaken or restructure the decoder such that reconstruction must rely on latent variables, favoring informative rather than degenerate flatness \citep{semeniutaHybrid2017,petitPreventing2021,chenVariational2016}. EAEs address this issue differently by using the entropy of encoder configurations to induce a prior over latent variables. The decoder is iteratively trained on these ensemble-generated latent codes, discouraging collapse by requiring the decoder to depend on the ensemble of codes to determine the reconstruction.
\section{Entropic Autoencoders \label{sec:eae}}

\paragraph{Overview.}
Entropic Autoencoders (EAEs) replace single-encoder training with a
finite-temperature ensemble of encoders, keeping the reconstruction loss
$L_{\mathrm{rec}}(\phi,\vartheta)$ as the only explicit objective. At outer
iteration $k$, the decoder parameters $\vartheta_k$ are fixed over inner loop $i=1,...,M$, during which Simmering
samples encoder parameters from the reconstruction-induced Gibbs measure
defined in Section~\ref{subsec:canonical_encoder_ensemble}. Section~\ref{subsec:cv_free_energy} shows how this ensemble induces
a collective-variable free energy whose entropy term favors high-volume regions
of encoder space. Then, the decoder is updated using reconstruction gradients
averaged over this ensemble-induced latent distribution (sampled during the inner loop), which corresponds to
descent along the decoder free energy landscape as described in
Section~\ref{subsec:decoder_entropic_selection}. Repeating these coupled steps
biases learning toward latent representations that are both reconstructive and
supported by many encoder configurations; see Algorithm~\ref{alg:eae_training} for a schematic description fo the training process.

\begin{algorithm}[t]
\caption{Entropic Autoencoder training}
\label{alg:eae_training}
\begin{algorithmic}[1]
\Require data $Y=\{y_n\}_{n=1}^N$, initial parameters $(\phi_0,\vartheta_0)$
\Require inverse temperature $\beta$, ensemble size $M$, minibatch size $N_b$
\Require tolerance $\varepsilon$, decoder update rule $\mathcal U_\vartheta$
\State Define minibatch loss
$\widehat L_{\mathcal B}(\phi,\vartheta)
:=
|\mathcal B|^{-1}\sum_{n\in\mathcal B}
\ell\!\left(y_n,D_{\vartheta}(E_\phi(y_n))\right)$
\State $k\gets0$, $\phi\gets\phi_0$, $\bar g_\vartheta\gets\infty$
\While{$\|\bar g_\vartheta\|>\varepsilon$}
    \State Fix decoder $\vartheta_k$
    \Comment{target $p_{\vartheta_k}(\phi\mid Y)$ in Eq.~\eqref{eq:eae_encoder_gibbs}}
    \State $\mathcal E\gets\emptyset$, $\mathcal G_\vartheta\gets\emptyset$
    \For{$i=1,\ldots,M$}
        \State Draw minibatch $\mathcal B_i\subset\{1,\ldots,N\}$
        \State $\mathcal E\gets \mathcal E\cup\{\phi\}$
        \Comment{store encoder sample}
        \State $\mathcal G_\vartheta\gets
        \mathcal G_\vartheta\cup
        \left\{
        \nabla_\vartheta \widehat L_{\mathcal B_i}(\phi,\vartheta_k)
        \right\}$
        \Comment{store decoder-gradient sample}
        \State $\phi\gets
        \mathrm{SimmeringStep}\!\left(
        \phi,\nabla_\phi \widehat L_{\mathcal B_i}(\phi,\vartheta_k),\beta
        \right)$
        \Comment{Appendix~\ref{app:eae_simmering}}
    \EndFor
    \State $\bar g_\vartheta\gets M^{-1}\sum_{g\in\mathcal G_\vartheta} g$
    \Comment{estimate of Eq.~\eqref{eq:eae_decoder_free_energy_gradient}}
    \State $\vartheta_{k+1}\gets \mathcal U_\vartheta(\vartheta_k,\bar g_\vartheta)$
    \Comment{one first-order decoder
update}
    \State $k\gets k+1$
\EndWhile
\State \Return decoder $\vartheta_k$ and encoder ensemble $\mathcal E$
\end{algorithmic}
\end{algorithm}

\subsection{Canonical encoder ensemble and induced latent distribution}
\label{subsec:canonical_encoder_ensemble}
For fixed decoder parameters $\vartheta_k$, EAE defines a canonical, i.e., maximum-entropy, ensemble over
encoder parameters,
\begin{equation}
p_{\vartheta_k}(\phi\mid Y)
=
\frac{1}{Z(\vartheta_k,\beta;Y)}
\exp\!\left\{
-\beta L_{\mathrm{rec}}(\phi,\vartheta_k)
\right\},
\qquad
Z(\vartheta_k,\beta;Y)
=
\int
\exp\!\left\{
-\beta L_{\mathrm{rec}}(\phi,\vartheta_k)
\right\}\,d\phi .
\label{eq:eae_encoder_gibbs}
\end{equation}
Here $\beta=1/T$ is the inverse temperature, and $Z(\vartheta_k,\beta;Y)$ is the
encoder partition function. Thus, under a fixed decoder, encoder configurations are
weighted by reconstruction energy: low-loss encoders are favored, but finite
temperature allows a family of near-optimal encoders to contribute to $p_{\vartheta_k}(\phi|Y)$.

In practice, we sample from $p_{\vartheta_k}(\phi\mid Y)$ using Simmering \citep{babayanSufficient2025}. For fixed
$\vartheta_k$, Simmering treats encoder parameters as finite-temperature,
one-dimensional, classical particles that time-evolve under the potential $L_{\mathrm{rec}}(\phi,\vartheta_k)$, with
forces $-\nabla_\phi L_{\mathrm{rec}}(\phi,\vartheta_k)$. A thermostat maintains
a chosen target temperature $T=1/\beta$ so that, after equilibration, the trajectory samples
near-optimal encoders rather than converging to a single minimizer; details are
given in Appendix~\ref{app:eae_simmering}.

For each observation $y_n$, the sampled encoders induce latent samples
$z_n^{(i)}=E_{\phi^{(i)}}(y_n)$. Equivalently, the latent distribution is the marginal distribution induced by the encoder Gibbs
measure through the encoder map,
\begin{equation}
p_k(z\mid y_n)
=
\int_{\mathbb R^{n_\phi}}
\delta\!\left(z-E_{\phi}(y_n)\right)
p_{\vartheta_k}(\phi\mid Y)\,d\phi .
\label{eq:eae_latent_pushforward}
\end{equation}
Therefore, uncertainty in the latent representation arises from variability
across encoder configurations, rather than from an explicitly prescribed
variational posterior or a KL penalty.
At evaluation time, the trained decoder and sampled encoder ensemble can be used
to construct the empirical latent ensemble by applying
$z_n^{(i)}=E_{\phi^{(i)}}(y_n)$; the sampling procedure is given in
Appendix~\ref{app:eae_sampling}.
\subsection{Collective-variable free energy and entropic bias}
\label{subsec:cv_free_energy}

The collective behaviour of the canonical ensemble of encoders is characterized by collective variables $\boldsymbol{\theta}=\boldsymbol{\theta}(\phi,Y)\in\mathbb R^{n_\theta}$. These
collective variables need not be known or prescribed during training: Simmering
samples $\phi$, and the corresponding values of $\boldsymbol{\theta}$ are induced
implicitly by the sampled encoders and the data.

At each temperature $T=1/\beta$, marginalizing the encoder Gibbs measure induces
a density over collective variables,
\begin{equation}
p_\beta(\boldsymbol{\theta}\mid \vartheta_k,Y)
=
\int
\delta^{n_\theta}\!\left(
\boldsymbol{\theta}(\phi,Y)-\boldsymbol{\theta}
\right)
p_{\vartheta_k}(\phi\mid Y)\,d\phi
\propto
\exp\!\left\{
-\beta F_\beta(\boldsymbol{\theta},\vartheta_k,Y)
\right\}.
\label{eq:eae_theta_pushforward}
\end{equation}
As derived in Appendix~\ref{app:cv_free_energy}, the corresponding free energy
can be written, under a first-cumulant, or slow-variation, approximation on
iso-$\boldsymbol{\theta}$ sets, as
\begin{equation}
F_\beta(\boldsymbol{\theta},\vartheta_k,Y)
=
\big\langle L_{\mathrm{rec}}(\phi,\vartheta_k)\big\rangle_{\boldsymbol{\theta}}
-
\frac{1}{\beta}S(\boldsymbol{\theta},Y),
\qquad
S(\boldsymbol{\theta},Y)=\log\Omega(\boldsymbol{\theta},Y),
\label{eq:eae_cv_free_energy}
\end{equation}
where $\Omega(\boldsymbol{\theta},Y)$ is the encoder-parameter volume, acting as an implicit prior over collective variables: values of
$\boldsymbol{\theta}$ supported by many encoder configurations receive greater
weight without specifying an explicit prior distribution. Thus, the
finite-temperature encoder ensemble defines an effective free-energy landscape
over collective variables in which reconstruction fidelity is balanced against
encoder-space volume.

This decomposition makes the entropic bias explicit. Since
$\Omega(\boldsymbol{\theta},Y)$ depends only on the ensemble-generated collective variables
$\boldsymbol{\theta}(\phi,Y)$ and the data, the entropy term is independent of the
decoder parameters, i.e., $\nabla_\vartheta S(\boldsymbol{\theta},Y)=0$. Therefore,
decoder updates reshape the collective-variable free energy only through the
conditional reconstruction term. For an infinitesimal decoder change
$\vartheta_{k+1}=\vartheta_k+\delta\vartheta$,
\begin{equation}
\delta F_\beta(\boldsymbol{\theta})
\approx
\left\langle
\nabla_{\vartheta}L_{\mathrm{rec}}(\phi,\vartheta_k)
\right\rangle_{\boldsymbol{\theta}}
\cdot \delta\vartheta .
\label{eq:eae_delta_cv_free_energy}
\end{equation}
Consequently, after each decoder update and resampling step, probability mass
shifts toward collective-variable induced latent features whose conditional reconstruction energy
has decreased, while the entropy term maintains a fixed geometric preference for
high-volume encoder features. Thus, the collective-variable free energy explicitly depicts how EAE balances reconstruction fidelity against encoder-space volume.

\subsection{Decoder updates as entropic selection}
\label{subsec:decoder_entropic_selection}

The encoder ensemble defines a decoder-level free energy through the same
partition function as in Eq. \ref{eq:eae_encoder_gibbs},
\begin{equation}
F_{\mathrm{dec}}(\vartheta;Y)
:=
-\frac{1}{\beta}
\log Z(\vartheta,\beta;Y),
\qquad
Z(\vartheta,\beta;Y)
=
\int
\exp\!\left\{-\beta L_{\mathrm{rec}}(\phi,\vartheta)\right\}\,d\phi .
\label{eq:eae_decoder_free_energy}
\end{equation}
Minimizing $F_{\mathrm{dec}}$ is therefore not equivalent to selecting a single
best encoder. Instead, a decoder has low free energy when a large volume of
encoder parameters achieves low reconstruction error under that decoder. Thus, this
decoder learning strategy favors decoders whose performance is supported by broad
families of good encoders.

As shown in Appendix~\ref{app:decoder_free_energy_gradient}, differentiating
$F_{\mathrm{dec}}$ gives
\begin{equation}
\nabla_\vartheta F_{\mathrm{dec}}(\vartheta_k;Y)
=
\mathbb E_{p_{\vartheta_k}(\phi\mid Y)}
\!\left[
\nabla_\vartheta L_{\mathrm{rec}}(\phi,\vartheta_k)
\right].
\label{eq:eae_decoder_free_energy_gradient}
\end{equation}
Hence, updating the decoder using the ensemble-averaged decoder gradient is
equivalent to decreasing the decoder free energy. After the decoder is updated,
Simmering samples a new encoder Gibbs measure determined by
$L_{\mathrm{rec}}(\phi,\vartheta_{k+1})$, which induces a new latent distribution.

At convergence, the decoder is self-consistent: it is stationary for the
reconstruction gradient averaged over the latent distribution induced by its own
stabilized encoder ensemble. In the limit $\beta\to\infty$, the encoder Gibbs
measure concentrates on reconstruction-loss minimizers, recovering the
deterministic autoencoder limit; at finite intermediate temperature, the ensemble
remains extended and supports nontrivial latent distributions, whereas at excessively high temperature, reconstruction fidelity weakly constrains the
encoder ensemble, yielding broad and low-information latent codes.

In the collective-variable view, this decoder update strategy changes the reconstruction-energy
landscape over the encoder-induced collective variables, while the entropy term
continues to favor high-volume encoder features. Thus, decoder learning performs
an entropically biased selection indirectly: the next encoder ensemble is drawn
toward latent features that are both supported by many encoder configurations and
useful for reconstruction under the updated decoder. This mitigates posterior
collapse by favoring high-volume latent structure that the decoder must use for reconstruction,
rather than identifying latent-variable directions it can ignore.

\section{Experiments \label{sec:experiments}}

To demonstrate that EAEs mitigate posterior collapse by learning meaningful latent representations, we present three training experiments aimed to show that an EAE can learn complex latent distributions \emph{and} reflect both explicit and implicit underlying structure in data. In Section \ref{sec:exp_1}, we show that an EAE can explicitly identify a superposition of admissible forms for the latent dynamics of a reaction-diffusion process with known low-dimensional structure.  In Sections \ref{sec:exp_2}--\ref{sec:exp_3}, we relax the explicit latent structure imposed in Section \ref{sec:exp_1} and train EAEs on image generation, a task with no known low-dimensional form. In these experiments, we find that EAEs encode dataset-level implicit structures in their latent representations. We include a discussion of practical considerations for EAE training implementations in Appendix \ref{app:limits_futurework}.  

\subsection{Recovery of meaningful low-dimensional representations with an EAE \label{sec:exp_1}}

\begin{figure}
    \centering
    \includegraphics[width=\linewidth]{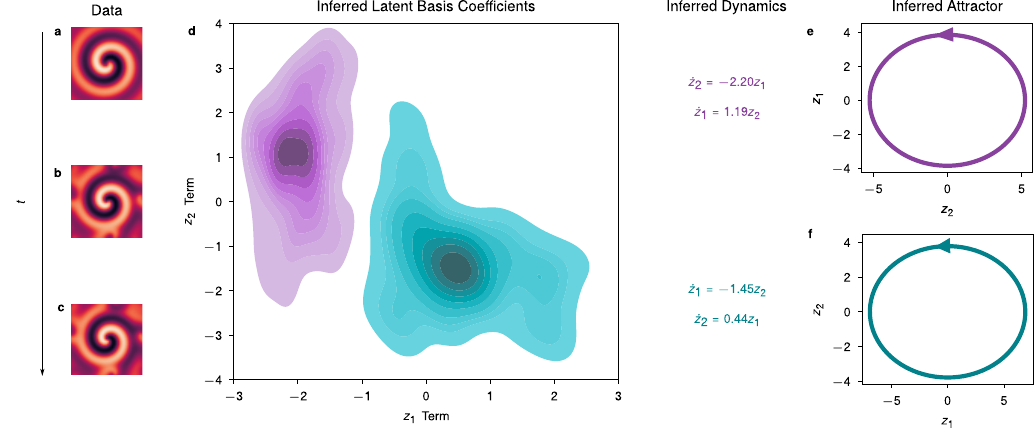}
    \caption{An EAE recovers a superposition of a system's known low-dimensional form in its latent dynamics (purple and teal regions in panel d, corresponding to the identified attractors in panels e and f respectively) from simulated data of a lambda-omega reaction-diffusion process \citep{championDatadriven2019} (initial, intermediate and final time snapshots of time-evolving data shown in panels a--c respectively). The EAE identifies these latent variable dynamics by using the ensemble-encoded latent variables to estimate the coefficients of candidate basis functions (see Appendix \ref{sec:appendix_exp1}). Both modes in the $(z_1,z_2)$-coefficient space (panel d) produce equivalent limit cycle dynamics (panels e and f) under a permutation of latent-variable coordinates, i.e., $z_1 \longleftrightarrow  z_2$. Beyond recovering the expected low-dimensional model of the data, the EAE's can depict implicit underlying structure such as latent representation symmetry. }
    \label{fig:toy_model}
\end{figure}

Figure \ref{fig:toy_model} shows the latent-variable dynamics  (\ref{fig:toy_model}e-f) inferred by an EAE trained from data describing the time evolution of a lambda-omega reaction-diffusion process (Fig. \ref{fig:toy_model}a--c) From 100x100-element spatially-discretized fixed-time snapshots of a wave and its time derivative generated by numerical integration from Ref. \citep{championDatadriven2019}, we trained an EAE to reconstruct the training data (Fig.\ref{fig:toy_model}a--c), with the aim of identifying the system's known reduced-dimensional form: oscillations near a limit cycle \citep{murrayMathBio1990}. We used the ensemble of encoders to infer the coefficients of several candidate basis functions (see \ref{sec:appendix_exp1}) describing the dynamics of each latent dimension (Fig. \ref{fig:toy_model} Inferred Dynamics). We identified two strongly-correlated combinations of linear-basis coefficients (Fig. \ref{fig:toy_model}d, and see \ref{sec:appendix_exp1}) both of which produce the anticipated limit cycle, i.e., attractor (Fig. \ref{fig:toy_model}e--f), in the learned low-dimensional space. Furthermore, the two coefficient combinations produce near-equivalent latent dynamics (Fig. \ref{fig:toy_model}e--f) under a permutation of the two latent dimensions, i.e., $z_1 \longleftrightarrow z_2$. Since the ``coordinate system'' of the latent space is not explicitly enforced during coefficient inference, the identified superposition of basis coefficients indicates that the EAE not only accurately models the reduced-dimensional dynamics, but also encodes the implicit structure of these dynamics, i.e., the symmetry in the choice of parameterization. Thus, Fig. \ref{fig:toy_model} shows that an EAE can recover accurate explicit forms for the underlying low-dimensional structure and provide insight into lower-dimensional model structure from high-dimensional data using only the reconstruction loss and its encoder ensemble.

\subsection{Comparison of latent variable distributions in image reconstruction tasks\label{sec:exp_2}}

\begin{table}[ht]
    \centering
    \caption{Autoencoder performance is assessed using reconstruction accuracy, via per-pixel mean-squared error on test data, and proportion of active latent units (see Appendix \ref{sec:appendix_exp2}), of three models (the EAE, the VAE, and the vanilla AE) on the MNIST \citep{lecunMNIST2010} and Frey Faces \citep{FreyFace} datasets. While the vanilla AE yields the lowest reconstruction error, the EAE outperforms both models in learning active latent representations.  }
    \begin{tabular}{|c|c|c|c|c|}
        \hline
         & \multicolumn{2}{c}{MNIST} & \multicolumn{2}{|c|}{Frey Faces} \\
        \cline{2-5}
         & Reconstruction Err. & Prop. of Active  & Reconstruction Err. & Prop. of Active  \\
         & (MSE on Test Data) & Latent Units & (MSE on Test Data) & Latent Units \\
         \hline
         EAE & 0.026 & 64/64 & 0.0052 & 64/64 \\
         VAE & 0.016 & 16/64 & 0.0046 & 3/64 \\
         Vanilla AE & 0.010 & 53/64 & 0.0014 & 64/64 \\
         \hline
    \end{tabular}
    \label{tab:comparison}
\end{table}

\begin{figure}
    \centering
    \includegraphics[width=\linewidth]{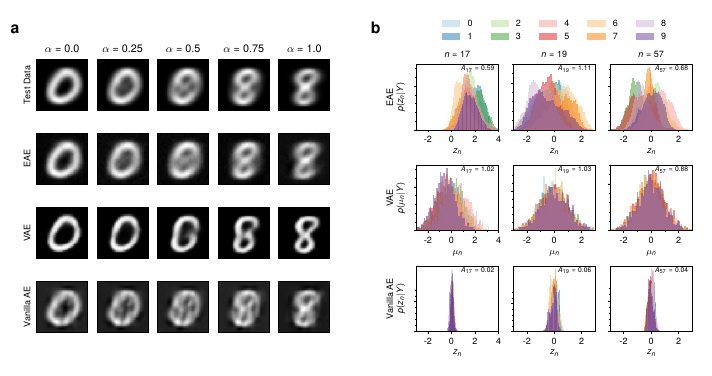}
    \caption{Analysis of interpolation capability (panel a) and learned latent distributions (panel b) of three autoencoder models (overall performance reflected in Table \ref{tab:comparison}) shows that the EAE produces the most data-consistent interpolation between average encodings of MNIST digits (panel a), and learns distinctly diverse and digit-category distinguishing distributions (panel b, top row) across sampled latent dimension distributions (more shown in Appendix \ref{sec:appendix_exp2} Fig. \ref{fig:appendix_mnist_vis}). EAE-generated images (panel a, second row) reflect the intrinsic variance of the corresponding digit data (panel a, top row), whereas the VAE (panel a, third row) produces artificially sharp interpolations, and the vanilla AE (panel a, bottom row) cannot interpolate altogether. Sampling digit-separated (legend at the top of panel b) latent distributions for three ``active'' dimensions (see Appendix \ref{sec:appendix_exp1}, activity measure $A_n$, in the corner of each subplot in panel b) in the latent space of each MNIST-trained model, we see that the VAE latent variable mean distributions exhibit significant overlap, while the vanilla AE distributions overlap and are relatively have low variance. In contrast, the EAE learns distinguishable distributions for different digits, reflecting the implicit categorical structure of the data in its latent representation. }
    \label{fig:mnist_compar}
\end{figure}

In this set of experiments, we compare EAE reconstruction performance and learned latent space distributions to other autoencoders in image reconstruction tasks, where, instead of explicit structure, the data may possess implicit structure, e.g., discrete categorical information in MNIST. 

Table \ref{tab:comparison} shows that, though a vanilla autoencoder (vanilla AE) can achieve a lower reconstruction error than the EAE and VAE, only an EAE can make complete use of its latent space dimensionality in two image reconstruction tasks, whereas vanilla autoencoders (vanilla AEs) exhibit inconsistent dimension employment and VAEs exhibit severe posterior collapse, with at most 25\% of its latent variables exceeding the latent activity threshold \citep{burdaImportance2015}. Though the vanilla AE achieves the lowest reconstruction error, this alone may not be an effective metric for assessing the models' latent representations, since in EAEs and VAEs, learning is implicitly or explicitly influenced by other (entropic or prior-matching) regularizing effects. However, the characterization of latent variable activity shows that, in for VAEs, the aforementioned regularization effects fail to promote optimal latent variable engagement, whereas the ensemble-induced entropic bias produces high-variance latent representation in the EAE.

To further characterize the learned latent representation differences between the three autoencoders, we compared their interpolations between digits $0$ and $8$ (Fig. \ref{fig:mnist_compar}a), with digit encodings linearly interpolated by a parameter $\alpha$ (see Appendix \ref{sec:appendix_exp2}), and their learned latent distributions separated by digit category (Fig. \ref{fig:mnist_compar}b), with more latent dimension comparisons shown in Fig. \ref{fig:appendix_mnist_vis}. We chose to analyze the learned representations for MNIST rather than for the Frey Faces dataset as the MNIST data offers an opportunity to look for implicit structure (digit categories) in the latent space encodings. Comparing the interpolations of each model with the intrinsic variance of the data (Fig. \ref{fig:mnist_compar}a, top row vs. other rows), we find that, though the EAE and vanilla AE both had a high proportion of latent activity (Table \ref{tab:comparison}) on this dataset, the EAE uses its high-dimensional representations of the digits to recover nuances in intra-category variability (EAE vs. Test Data in \ref{fig:mnist_compar}a), whereas the vanilla AE cannot interpolate smoothly and its generations exhibit data-inconsistent artifacts. While the VAE's interpolations are contrast relative to the EAE, when they are compared to the data (VAE vs. Test Data in \ref{fig:mnist_compar}a), we see that this interpolation sharpness is another data-inconsistent artifact indicating a mismatch between identified latent structure and the data distribution. Taken altogether, Table \ref{tab:comparison} and Fig. \ref{fig:mnist_compar}a show that, though the EAE exhibits the highest reconstruction error, its representation flexibility allows it to maximally take advantage of its latent space dimensionality to capture data distribution nuance that is easily missed under posterior collapse in, e.g., a VAE.

Figure \ref{fig:mnist_compar}b brings the difference in learned representations between the EAE and other autoencoders into sharp relief, where the EAE uniquely encodes a clear distinction between digit categories in its learned latent distribution. In Fig. \ref{fig:mnist_compar}b, we see that, though both the vanilla AE and the EAE had high proportions of latent variable activity on this dataset (Table \ref{tab:comparison}), the EAE learns more complex and higher-variance distributions for each dimension (Fig. \ref{fig:mnist_compar} top row), that take on distinct forms for different digit categories. In contrast, the Vanilla AE's latent distributions (Fig. \ref{fig:mnist_compar} bottom row) have relatively low variance and overlap over much of the encoding (x-axis) range. Furthermore, though the VAE's per-dimension latent activities ($A_n$ in subplots of Fig. \ref{fig:mnist_compar}) are comparable in magnitude to those of the EAE, its latent distributions for different digits completely overlap, indicating an absence of encoded distinction between the digit categories in the latent space. Fig. \ref{fig:mnist_compar}b (and Fig. \ref{fig:appendix_mnist_vis}) shows that, beyond mitigating posterior collapse overall, an EAE also learn richer and more informative representations on a per-dimension level, including encoding implicit latent structure in the learned distributional form.

\subsection{EAE Dynamics and Reconstruction Morphology\label{sec:exp_3}}

\begin{figure}
    \centering
    \includegraphics[width=\linewidth]{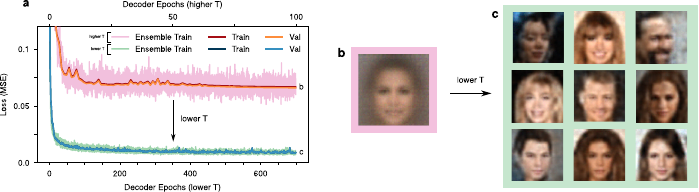}
    \caption{On the CelebA dataset \citep{celebA}, an EAE can exhibit two different generative morphologies -- an ``all-human'' face (panel b), and a variety of data-consistent image generations (panel c) (see Appendix \ref{sec:appendix_exp3}). The model corresponding to the ``all-human'' face (pink, red and orange training curves and upper x-axis, panel a) was trained at with a higher temperature parameter, resulting in a more generic but still data-faithful reconstruction. Training an EAE at a lower temperature (teal, navy and blue lines and lower x-axis, panel a) manifests individual-dependent facial morphology in image generation. Note that for both models, the lower-opacity ``Ensemble Train'' curves (pink and teal curves, panel a) show the sampled encoder ensemble member losses, and the opaque curves (red, orange, navy, blue curves, panel a) show the loss after a decoder parameter update. }
    \label{fig:celebA}
\end{figure}


For successful facial image reconstruction, the EAE must encode both the general properties of a face, and be able to manifest individual-level differences in reconstruction. Figure \ref{fig:celebA}b shows a reconstruction sample on test data after training an EAE (panel a, pink, red and orange curve) at relatively high temperature (for the given problem), which depicts an ``all-human'', generic facial structure but not unique individual-level facial characteristics. The relatively high variance of the higher-temperature ensemble member loss distribution (pink envelope vs. teal envelope, \ref{fig:celebA}a) indicates that the sampled region of the solution space is not strongly constrained by reconstruction-loss minimization, i.e., is relatively uninformative, but the model nonetheless learns latent structure that manifests the universal properties of faces in its reconstructions. After lowering the temperature (\ref{fig:celebA}a, teal, navy and blue curve), we observe the predicted effect of temperature on the decoder parameter search mechanism: at a suitably low temperature, the decoder updates direct latent structure learning toward informative representations that facilitate individual-level facial feature reconstructions (\ref{fig:celebA}c) while still adhering to the dataset-level structure present in a higher-temperature model. In addition to achieving accurate reconstruction, the ``emergence'' of more complex facial reconstructions at a lower temperature from a more generic reconstruction at a higher temperature implies that the EAE can form quite complex implicit models of data, including a hierarchical understanding of facial feature distinction.

\section{Discussion}

EAEs suggest a different interpretation of posterior collapse: the failure may
not lie only in optimization, but also in choosing an inappropriate
effective description of the data. From a statistical-physics
viewpoint, dimensional reduction is useful when the reduced variables behave like
appropriately defined collective variables. A VAE makes the latent distribution explicit
by specifying a variational family and matching it to a prescribed prior, most
often Gaussian. This gives a tractable distributional representation, but also
fixes an effective latent geometry in advance. In physical terms, this risks
constructing an effective theory in the wrong order parameter; when the imposed
latent description is misaligned with the structure needed for reconstruction,
the decoder can learn to ignore the latent code, producing posterior collapse.

EAEs keep the useful distributional aspect of VAEs, but reverse the order of
construction. Rather than prescribing the latent distribution, it samples a
finite-temperature ensemble of encoders and lets the latent structure emerge
from reconstruction loss and encoder-space volume. The decoder is then updated
against this ensemble, so latent features are reinforced only when they are both
high-volume in encoder space and useful for reconstruction. In this sense, EAEs
treat latent variables as emergent collective degrees of freedom. Temperature
facilitates exploration among near-equivalent encoders, entropy induces a
prior-like bias toward robust latent structure, and decoder learning pursues informative latent structures.

\section{Conclusion}

We proposed a new autoencoder training framework, Entropic Autoencoders (EAEs), which mitigates posterior collapse. EAEs induce, rather than impose, a prior over latent variables via a maximum-entropy ensemble of encoders. While a decoder employs the encoder ensemble for reconstruction, it guides the next encoder ensemble toward more informative inferred latent distributions preventing posterior collapse. We showed that EAEs both avoid posterior collapse in image generation tasks, and produce latent distributions which depict explicit (governing equations of reaction-diffusion dynamics) and implicit (categorical and morphological distinction) underlying structure in data. 

\newpage
\begin{ack}
We acknowledge the support of the Natural Sciences and Engineering Research Council of Canada (NSERC) grant RGPIN-2019-05655. IB acknowledges the support of an Ontario Graduate Scholarship.
\end{ack}

\bibliographystyle{plainnat}
\bibliography{NeuralPS_EAE}


\appendix

\section{Appendix / supplemental material}

\subsection{Collective-variable free energy of the encoder ensemble\label{app:cv_free_energy}}

In this appendix, we derive the collective-variable free-energy relation used in
Section~\ref{subsec:cv_free_energy}. Throughout, the decoder parameters
$\vartheta$ are fixed unless stated otherwise. The encoder ensemble is the Gibbs
measure
\[
p_{\vartheta}(\phi\mid Y)
=
\frac{1}{Z(\vartheta,\beta;Y)}
\exp\!\left\{-\beta L_{\mathrm{rec}}(\phi,\vartheta)\right\},
\]
with partition function
\[
Z(\vartheta,\beta;Y)
=
\int_{\mathbb R^{n_\phi}}
\exp\!\left\{-\beta L_{\mathrm{rec}}(\phi,\vartheta)\right\}
\,d\phi .
\]

Let
\[
\boldsymbol{\theta}
=
\boldsymbol{\theta}(\phi,Y)
\in \mathbb R^{n_\theta}
\]
be a decoder-independent set of collective variables that coarse-grain the
latent representation induced by encoder parameters $\phi$ on the dataset $Y$.
These variables need not be known, evaluated, or prescribed during training.
They provide an implicit coarse description of the structure sampled by
Simmering: the algorithm samples encoder parameters $\phi$, while any such
collective variables are induced only through the sampled encoders and the data.

\begin{proposition}[Collective-variable free energy]
\label{prop:cv_free_energy}
Marginalizing the encoder Gibbs measure over all encoder parameters that realize
the same collective variables gives the induced marginal density over
$\boldsymbol{\theta}$. Define the encoder-parameter volume associated with a
collective-variable value $\boldsymbol{\theta}$ by
\[
\Omega(\boldsymbol{\theta},Y)
=
\int_{\mathbb R^{n_\phi}}
\delta^{n_\theta}
\!\left(
\boldsymbol{\theta}(\phi,Y)-\boldsymbol{\theta}
\right)
\,d\phi ,
\]
and let
\[
S(\boldsymbol{\theta},Y)
=
\log \Omega(\boldsymbol{\theta},Y).
\]
For any integrable function $g$, define the conditional average over the
iso-$\boldsymbol{\theta}$ set by
\[
\langle g(\phi)\rangle_{\boldsymbol{\theta}}
=
\frac{1}{\Omega(\boldsymbol{\theta},Y)}
\int_{\mathbb R^{n_\phi}}
g(\phi)
\delta^{n_\theta}
\!\left(
\boldsymbol{\theta}(\phi,Y)-\boldsymbol{\theta}
\right)
\,d\phi .
\]
The collective-variable free energy is defined implicitly by
\[
\exp\!\left\{-\beta F_\beta(\boldsymbol{\theta},\vartheta,Y)\right\}
=
\int_{\mathbb R^{n_\phi}}
\exp\!\left\{-\beta L_{\mathrm{rec}}(\phi,\vartheta)\right\}
\delta^{n_\theta}
\!\left(
\boldsymbol{\theta}(\phi,Y)-\boldsymbol{\theta}
\right)
\,d\phi .
\]
Equivalently,
\[
F_\beta(\boldsymbol{\theta},\vartheta,Y)
=
-\frac{1}{\beta}
\log
\left\langle
\exp\!\left\{-\beta L_{\mathrm{rec}}(\phi,\vartheta)\right\}
\right\rangle_{\boldsymbol{\theta}}
-
\frac{1}{\beta}S(\boldsymbol{\theta},Y).
\]
Consequently, the induced density over collective variables is
\[
p(\boldsymbol{\theta}\mid \vartheta,Y)
=
\frac{1}{Z(\vartheta,\beta;Y)}
\exp\!\left\{-\beta F_\beta(\boldsymbol{\theta},\vartheta,Y)\right\}.
\]
\end{proposition}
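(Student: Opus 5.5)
The plan is to treat the proposition as a short chain of identities built from the definitions of $\Omega$, $\langle\cdot\rangle_{\boldsymbol{\theta}}$ and $F_\beta$. Measure-theoretic care will be needed only for the delta function on iso-$\boldsymbol{\theta}$ sets. I will interpret $\delta^{n_\theta}(\boldsymbol{\theta}(\phi,Y)-\boldsymbol{\theta})\,d\phi$ through the coarea formula. I assume $\boldsymbol{\theta}(\cdot,Y)$ is smooth with a Jacobian of full rank $n_\theta$ almost everywhere. Then, for integrable $h$,
\[
\int_{\mathbb R^{n_\phi}} h(\phi)\,d\phi=\int_{\mathbb R^{n_\theta}}\left[\int h(\phi)\,\delta^{n_\theta}\!\left(\boldsymbol{\theta}(\phi,Y)-\boldsymbol{\theta}\right)d\phi\right]d\boldsymbol{\theta},
\]
where the inner integral is a surface integral over the level set, weighted by the inverse Jacobian determinant. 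This identity is the disintegration of Lebesgue measure along the fibres of $\boldsymbol{\theta}$. It is the only nontrivial input and also the main obstacle: I will state the regularity assumption explicitly rather than try to remove it. I will also restrict to $\boldsymbol{\theta}$ with $0<\Omega(\boldsymbol{\theta},Y)<\infty$, so that the conditional average is well defined.

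First, I will establish the equivalent form of $F_\beta$. Take $g=\exp\{-\beta L_{\mathrm{rec}}(\phi,\vartheta)\}$ in the definition of the conditional average. This $g$ is bounded by $1$, since $L_{\mathrm{rec}}\ge 0$, so it is integrable on each fibre whenever $\Omega<\infty$. Multiplying by $\Omega$ gives
\[
\int e^{-\beta L_{\mathrm{rec}}}\,\delta^{n_\theta}(\cdots)\,d\phi=\Omega(\boldsymbol{\theta},Y)\,\big\langle e^{-\beta L_{\mathrm{rec}}}\big\rangle_{\boldsymbol{\theta}}.
\]
The left side is $\exp\{-\beta F_\beta\}$ by definition. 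Taking $-\beta^{-1}\log$ and using $S=\log\Omega$ yields the stated expression, with the $-\beta^{-1}S$ term appearing additively.

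Second, I will derive the induced density. For a test set $A\subset\mathbb R^{n_\theta}$, the quantity $P(\boldsymbol{\theta}(\phi,Y)\in A)$ equals $Z^{-1}\int \mathbf 1_A(\boldsymbol{\theta}(\phi,Y))e^{-\beta L_{\mathrm{rec}}}d\phi$. Applying the coarea identity with $h=\mathbf 1_A(\boldsymbol{\theta}(\phi,Y))e^{-\beta L_{\mathrm{rec}}}$, the indicator becomes $\mathbf 1_A(\boldsymbol{\theta})$ on each fibre. This gives $\int_A Z^{-1}e^{-\beta F_\beta(\boldsymbol{\theta},\vartheta,Y)}\,d\boldsymbol{\theta}$, so the pushforward density is $Z^{-1}e^{-\beta F_\beta}$, which matches Eq.~\eqref{eq:eae_theta_pushforward}.

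As a consistency check, I will apply the same identity with $A=\mathbb R^{n_\theta}$. This shows $\int e^{-\beta F_\beta}d\boldsymbol{\theta}=Z$, so the density is normalized and $Z$ serves simultaneously as the encoder and the collective-variable partition function. Finally, I will remark that the first-cumulant form in Eq.~\eqref{eq:eae_cv_free_energy} follows by Jensen's inequality. The inequality $-\beta^{-1}\log\langle e^{-\beta L}\rangle_{\boldsymbol{\theta}}\le\langle L\rangle_{\boldsymbol{\theta}}$ holds, with equality to first order when $L_{\mathrm{rec}}$ varies slowly on iso-$\boldsymbol{\theta}$ sets. This remark is not part of the proposition itself, and I will present it only as an approximation.
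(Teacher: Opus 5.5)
Your proposal is correct and follows the paper's route. The paper inserts $1=\int\delta^{n_\theta}(\boldsymbol{\theta}(\phi,Y)-\boldsymbol{\theta})\,d^{n_\theta}\theta$ into $Z$ and exchanges the integrals (the formal counterpart of your coarea disintegration) to get $Z=\int e^{-\beta F_\beta}\,d^{n_\theta}\theta$ and read off the density, then factors the restricted integral as $\Omega\,\langle e^{-\beta L_{\mathrm{rec}}}\rangle_{\boldsymbol{\theta}}$ and takes $-\beta^{-1}\log$, exactly as you do; your full-rank regularity assumption, the restriction to $0<\Omega<\infty$, and the test-set identification of the pushforward density make rigorous what the paper does formally with delta functions.
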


\begin{proof}
Insert the identity
\[
1
=
\int_{\mathbb R^{n_\theta}}
\delta^{n_\theta}
\!\left(
\boldsymbol{\theta}(\phi,Y)-\boldsymbol{\theta}
\right)
\,d^{n_\theta}\theta
\]
into the encoder partition function
\[
Z(\vartheta,\beta;Y)
=
\int_{\mathbb R^{n_\phi}}
\exp\!\left\{-\beta L_{\mathrm{rec}}(\phi,\vartheta)\right\}
\,d\phi .
\]
This gives
\[
Z(\vartheta,\beta;Y)
=
\int_{\mathbb R^{n_\theta}}
d^{n_\theta}\theta
\int_{\mathbb R^{n_\phi}}
\exp\!\left\{-\beta L_{\mathrm{rec}}(\phi,\vartheta)\right\}
\delta^{n_\theta}
\!\left(
\boldsymbol{\theta}(\phi,Y)-\boldsymbol{\theta}
\right)
\,d\phi .
\]
Define
\[
\exp\!\left\{-\beta F_\beta(\boldsymbol{\theta},\vartheta,Y)\right\}
:=
\int_{\mathbb R^{n_\phi}}
\exp\!\left\{-\beta L_{\mathrm{rec}}(\phi,\vartheta)\right\}
\delta^{n_\theta}
\!\left(
\boldsymbol{\theta}(\phi,Y)-\boldsymbol{\theta}
\right)
\,d\phi .
\]
Then
\[
Z(\vartheta,\beta;Y)
=
\int_{\mathbb R^{n_\theta}}
\exp\!\left\{-\beta F_\beta(\boldsymbol{\theta},\vartheta,Y)\right\}
\,d^{n_\theta}\theta ,
\]
and hence the induced density over collective variables is
\[
p(\boldsymbol{\theta}\mid \vartheta,Y)
=
\frac{
\exp\!\left\{-\beta F_\beta(\boldsymbol{\theta},\vartheta,Y)\right\}
}{
Z(\vartheta,\beta;Y)
}.
\]

Next, factor the restricted integral into an iso-$\boldsymbol{\theta}$ average
and a volume term:
\[
\exp\!\left\{-\beta F_\beta(\boldsymbol{\theta},\vartheta,Y)\right\}
=
\Omega(\boldsymbol{\theta},Y)
\left\langle
\exp\!\left\{-\beta L_{\mathrm{rec}}(\phi,\vartheta)\right\}
\right\rangle_{\boldsymbol{\theta}} .
\]
Taking $-\beta^{-1}\log$ of both sides gives
\[
F_\beta(\boldsymbol{\theta},\vartheta,Y)
=
-\frac{1}{\beta}
\log
\left\langle
\exp\!\left\{-\beta L_{\mathrm{rec}}(\phi,\vartheta)\right\}
\right\rangle_{\boldsymbol{\theta}}
-
\frac{1}{\beta}
\log \Omega(\boldsymbol{\theta},Y).
\]
Using $S(\boldsymbol{\theta},Y)=\log\Omega(\boldsymbol{\theta},Y)$ completes the
proof.
\end{proof}

\begin{corollary}[First-cumulant approximation]
\label{cor:first_cumulant_cv_free_energy}
Under the conditions of Proposition~\ref{prop:cv_free_energy}, choose collective
variables $\boldsymbol{\theta}=\boldsymbol{\theta}(\phi,Y)$ such that
fluctuations of $L_{\mathrm{rec}}(\phi,\vartheta)$ over each
iso-$\boldsymbol{\theta}$ set are small. Then conditional fluctuation terms
beyond the mean may be neglected, giving
\[
F_\beta(\boldsymbol{\theta},\vartheta,Y)
\approx
\left\langle
L_{\mathrm{rec}}(\phi,\vartheta)
\right\rangle_{\boldsymbol{\theta}}
-
\frac{1}{\beta}
S(\boldsymbol{\theta},Y).
\]
\end{corollary}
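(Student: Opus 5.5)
Starting from the exact identity of Proposition~\ref{prop:cv_free_energy},
\[
F_\beta(\boldsymbol{\theta},\vartheta,Y)
=
-\frac{1}{\beta}\log\left\langle e^{-\beta L_{\mathrm{rec}}(\phi,\vartheta)}\right\rangle_{\boldsymbol{\theta}}
-\frac{1}{\beta}S(\boldsymbol{\theta},Y),
\]
the entropy term is already in its final form. All the work is therefore in the first term, which I will treat as a cumulant generating function. The iso-$\boldsymbol{\theta}$ average $\langle\cdot\rangle_{\boldsymbol{\theta}}$ is a genuine probability average: it is normalized by $\Omega(\boldsymbol{\theta},Y)$, which I assume finite and positive. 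Write $L=L_{\mathrm{rec}}(\phi,\vartheta)$, $\bar L=\langle L\rangle_{\boldsymbol{\theta}}$ and $\Delta L=L-\bar L$. Factoring out the mean gives
\[
-\frac{1}{\beta}\log\left\langle e^{-\beta L}\right\rangle_{\boldsymbol{\theta}}
=
\bar L-\frac{1}{\beta}\log\left\langle e^{-\beta\Delta L}\right\rangle_{\boldsymbol{\theta}} .
\]
This step is exact, and it isolates the first cumulant.

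Next I expand the remaining log-moment-generating function in cumulants of $\Delta L$:
\[
-\frac{1}{\beta}\log\left\langle e^{-\beta \Delta L}\right\rangle_{\boldsymbol{\theta}}
=
-\frac{\beta}{2}\,\kappa_2(\boldsymbol{\theta})+\frac{\beta^2}{6}\,\kappa_3(\boldsymbol{\theta})-\cdots,
\]
where $\kappa_2=\langle \Delta L^2\rangle_{\boldsymbol{\theta}}$ is the conditional variance. The first-order term vanishes because $\langle\Delta L\rangle_{\boldsymbol{\theta}}=0$. The hypothesis that fluctuations of $L_{\mathrm{rec}}$ on each iso-$\boldsymbol{\theta}$ set are small is read as $\beta\,\sigma_{\boldsymbol{\theta}}\ll1$, where $\sigma_{\boldsymbol{\theta}}^2=\kappa_2$; more generally $\beta^{m-1}\kappa_m$ is small for $m\ge2$. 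Dropping these terms yields the claimed approximation.

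To make ``neglected'' more than formal, I would also record two rigorous bounds on the correction $R(\boldsymbol{\theta})=F_\beta-\bar L+\beta^{-1}S$.
\begin{itemize}
\item Upper bound: by Jensen's inequality, $\langle e^{-\beta\Delta L}\rangle_{\boldsymbol{\theta}}\ge e^{0}=1$, so $R\le0$. The first-cumulant expression is therefore always an upper bound on $F_\beta$, in the Gibbs--Bogoliubov sense.
\item Lower bound: if $|\Delta L|\le\delta(\boldsymbol{\theta})$ on the iso-set, then $\langle e^{-\beta\Delta L}\rangle_{\boldsymbol{\theta}}\le e^{\beta\delta}$, giving $|R|\le\delta$. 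Sharper, Hoeffding's lemma gives $|R|\le\beta\delta^2/2$.
\end{itemize}
Together these give $F_\beta=\bar L-\beta^{-1}S+O(\beta\delta^2)$ under the slow-variation hypothesis.

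The main obstacle is making the hypothesis precise so that the cumulant series is controlled rather than merely formal. The cumulant series need not converge for unbounded $L$ on non-compact iso-sets. I would sidestep this by stating the result through the bounded-fluctuation (Hoeffding) estimate rather than the full series. I would then remark that the small-fluctuation condition must hold relative to the temperature: the neglected term scales like $\beta\,\mathrm{Var}_{\boldsymbol{\theta}}(L)$. This is the sense in which a good choice of collective variables, one that nearly resolves the loss, justifies the approximation.
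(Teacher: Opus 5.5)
Your proof is correct and follows the paper's argument: you factor $e^{-\beta\bar L}$ exactly out of the iso-$\boldsymbol{\theta}$ average, then expand $\log\langle e^{-\beta\Delta L}\rangle_{\boldsymbol{\theta}}$ (the linear term vanishes because $\langle\Delta L\rangle_{\boldsymbol{\theta}}=0$), and finally drop the $-\tfrac{\beta}{2}\kappa_2+\tfrac{\beta^2}{6}\kappa_3-\cdots$ corrections, which is what the paper does with its moment expansion to $O(\beta^3)$. Your Jensen bound ($R\le 0$, so the first-cumulant expression always upper-bounds $F_\beta$) and your bounded-fluctuation estimate ($-\min\{\delta,\beta\delta^2/2\}\le R\le 0$, with the Hoeffding form sharper only when $\beta\delta<2$) go beyond the paper's purely formal series: they make ``neglected'' quantitative and show that the controlling quantity is $\beta\delta^2$ (or $\tfrac{\beta}{2}\mathrm{Var}_{\boldsymbol{\theta}}L$), i.e.\ fluctuations small compared with the temperature $1/\beta$, rather than $\beta$ itself.
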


\begin{proof}
From Proposition~\ref{prop:cv_free_energy}, the exact collective-variable free
energy is
\[
F_\beta(\boldsymbol{\theta},\vartheta,Y)
=
-\frac{1}{\beta}
\log
\left\langle
\exp\!\left\{-\beta L_{\mathrm{rec}}(\phi,\vartheta)\right\}
\right\rangle_{\boldsymbol{\theta}}
-
\frac{1}{\beta}S(\boldsymbol{\theta},Y).
\]
For fixed $\boldsymbol{\theta}$, define the conditional mean reconstruction loss
\[
\overline L_{\boldsymbol{\theta}}(\vartheta)
:=
\left\langle
L_{\mathrm{rec}}(\phi,\vartheta)
\right\rangle_{\boldsymbol{\theta}},
\]
and write the reconstruction loss as its conditional mean plus a fluctuation:
\[
L_{\mathrm{rec}}(\phi,\vartheta)
=
\overline L_{\boldsymbol{\theta}}(\vartheta)
+
\Delta L_{\boldsymbol{\theta}}(\phi,\vartheta),
\]
where
\[
\Delta L_{\boldsymbol{\theta}}(\phi,\vartheta)
:=
L_{\mathrm{rec}}(\phi,\vartheta)
-
\overline L_{\boldsymbol{\theta}}(\vartheta).
\]
By construction,
\[
\left\langle
\Delta L_{\boldsymbol{\theta}}(\phi,\vartheta)
\right\rangle_{\boldsymbol{\theta}}
=
0.
\]
Substituting this decomposition into the exponential average gives
\[
\left\langle
\exp\!\left\{-\beta L_{\mathrm{rec}}(\phi,\vartheta)\right\}
\right\rangle_{\boldsymbol{\theta}}
=
\left\langle
\exp\!\left\{
-\beta
\left[
\overline L_{\boldsymbol{\theta}}(\vartheta)
+
\Delta L_{\boldsymbol{\theta}}(\phi,\vartheta)
\right]
\right\}
\right\rangle_{\boldsymbol{\theta}}.
\]
Since $\overline L_{\boldsymbol{\theta}}(\vartheta)$ is constant over the
iso-$\boldsymbol{\theta}$ set, it factors out:
\[
\left\langle
\exp\!\left\{-\beta L_{\mathrm{rec}}(\phi,\vartheta)\right\}
\right\rangle_{\boldsymbol{\theta}}
=
\exp\!\left\{
-\beta \overline L_{\boldsymbol{\theta}}(\vartheta)
\right\}
\left\langle
\exp\!\left\{
-\beta \Delta L_{\boldsymbol{\theta}}(\phi,\vartheta)
\right\}
\right\rangle_{\boldsymbol{\theta}}.
\]
Taking the logarithm gives
\[
\log
\left\langle
\exp\!\left\{-\beta L_{\mathrm{rec}}(\phi,\vartheta)\right\}
\right\rangle_{\boldsymbol{\theta}}
=
-\beta \overline L_{\boldsymbol{\theta}}(\vartheta)
+
\log
\left\langle
\exp\!\left\{
-\beta \Delta L_{\boldsymbol{\theta}}(\phi,\vartheta)
\right\}
\right\rangle_{\boldsymbol{\theta}}.
\]
Therefore,
\[
-\frac{1}{\beta}
\log
\left\langle
\exp\!\left\{-\beta L_{\mathrm{rec}}(\phi,\vartheta)\right\}
\right\rangle_{\boldsymbol{\theta}}
=
\overline L_{\boldsymbol{\theta}}(\vartheta)
-
\frac{1}{\beta}
\log
\left\langle
\exp\!\left\{
-\beta \Delta L_{\boldsymbol{\theta}}(\phi,\vartheta)
\right\}
\right\rangle_{\boldsymbol{\theta}}.
\]

We now expand the remaining fluctuation term. Since
$\langle \Delta L_{\boldsymbol{\theta}}\rangle_{\boldsymbol{\theta}}=0$,
\[
\left\langle
\exp\!\left\{
-\beta \Delta L_{\boldsymbol{\theta}}
\right\}
\right\rangle_{\boldsymbol{\theta}}
=
1
+
\frac{\beta^2}{2}
\left\langle
(\Delta L_{\boldsymbol{\theta}})^2
\right\rangle_{\boldsymbol{\theta}}
-
\frac{\beta^3}{6}
\left\langle
(\Delta L_{\boldsymbol{\theta}})^3
\right\rangle_{\boldsymbol{\theta}}
+
O(\beta^4).
\]
Taking the logarithm yields
\[
\log
\left\langle
\exp\!\left\{
-\beta \Delta L_{\boldsymbol{\theta}}
\right\}
\right\rangle_{\boldsymbol{\theta}}
=
\frac{\beta^2}{2}
\left\langle
(\Delta L_{\boldsymbol{\theta}})^2
\right\rangle_{\boldsymbol{\theta}}
-
\frac{\beta^3}{6}
\left\langle
(\Delta L_{\boldsymbol{\theta}})^3
\right\rangle_{\boldsymbol{\theta}}
+
O(\beta^4).
\]
Hence
\[
-\frac{1}{\beta}
\log
\left\langle
\exp\!\left\{-\beta L_{\mathrm{rec}}(\phi,\vartheta)\right\}
\right\rangle_{\boldsymbol{\theta}}
=
\overline L_{\boldsymbol{\theta}}(\vartheta)
-
\frac{\beta}{2}
\left\langle
(\Delta L_{\boldsymbol{\theta}})^2
\right\rangle_{\boldsymbol{\theta}}
+
\frac{\beta^2}{6}
\left\langle
(\Delta L_{\boldsymbol{\theta}})^3
\right\rangle_{\boldsymbol{\theta}}
+
O(\beta^3).
\]
Thus, the exact energy-like term contains the conditional mean reconstruction
loss plus corrections from conditional fluctuations of the reconstruction loss on
the iso-$\boldsymbol{\theta}$ set. If these fluctuations are small, or if these
higher-order fluctuation terms are neglected, then
\[
-\frac{1}{\beta}
\log
\left\langle
\exp\!\left\{-\beta L_{\mathrm{rec}}(\phi,\vartheta)\right\}
\right\rangle_{\boldsymbol{\theta}}
\approx
\overline L_{\boldsymbol{\theta}}(\vartheta)
=
\left\langle
L_{\mathrm{rec}}(\phi,\vartheta)
\right\rangle_{\boldsymbol{\theta}}.
\]
Substituting this approximation into the exact free-energy expression gives
\[
F_\beta(\boldsymbol{\theta},\vartheta,Y)
\approx
\left\langle
L_{\mathrm{rec}}(\phi,\vartheta)
\right\rangle_{\boldsymbol{\theta}}
-
\frac{1}{\beta}
S(\boldsymbol{\theta},Y),
\]
which proves the claim.
\end{proof}
Equivalently, the induced marginal density over collective variables can be
written in a Bayes-like form,
\[
p(\boldsymbol{\theta}\mid \vartheta,Y)
\propto
\exp\!\left\{
-\beta
\bar L_{\mathrm{rec}}(\boldsymbol{\theta},\vartheta,Y)
\right\}
\,
\Omega(\boldsymbol{\theta},Y),
\]
where
\[
\bar L_{\mathrm{rec}}(\boldsymbol{\theta},\vartheta,Y)
:=
\left\langle
L_{\mathrm{rec}}(\phi,\vartheta)
\right\rangle_{\boldsymbol{\theta}} .
\]
Thus $\Omega(\boldsymbol{\theta},Y)$ acts as an implicit entropic prior over
collective variables: collective-variable values supported by larger
encoder-parameter volume receive greater prior weight, without specifying an
explicit prior distribution.

\paragraph{Effect of decoder updates.}
For a fixed decoder-independent collective-variable map
$\boldsymbol{\theta}(\phi,Y)$, the entropy term depends only on the geometry of
the encoder-to-collective-variable map:
\[
S(\boldsymbol{\theta},Y)
=
\log
\int_{\mathbb R^{n_\phi}}
\delta^{n_\theta}
\!\left(
\boldsymbol{\theta}(\phi,Y)-\boldsymbol{\theta}
\right)
\,d\phi .
\]
Hence
\[
\nabla_\vartheta S(\boldsymbol{\theta},Y)=0.
\]
Therefore, decoder updates reshape the collective-variable free energy only
through the reconstruction term. Under the first-cumulant approximation, for an
infinitesimal decoder change
$\vartheta_{k+1}=\vartheta_k+\delta\vartheta$,
\[
\delta F_\beta(\boldsymbol{\theta})
:=
F_\beta(\boldsymbol{\theta},\vartheta_{k+1},Y)
-
F_\beta(\boldsymbol{\theta},\vartheta_k,Y)
\]
satisfies
\[
\delta F_\beta(\boldsymbol{\theta})
\approx
\left\langle
L_{\mathrm{rec}}(\phi,\vartheta_{k+1})
-
L_{\mathrm{rec}}(\phi,\vartheta_k)
\right\rangle_{\boldsymbol{\theta}} .
\]
Linearizing in $\delta\vartheta$ gives
\[
\delta F_\beta(\boldsymbol{\theta})
\approx
\left\langle
\nabla_\vartheta L_{\mathrm{rec}}(\phi,\vartheta_k)
\right\rangle_{\boldsymbol{\theta}}
\cdot
\delta\vartheta .
\]
Hence, in the collective-variable description, a decoder update changes the
induced probability of collective variables by altering their conditional
reconstruction energy, while the entropic preference for high-volume encoder
features remains fixed. This is the sense in which EAE performs entropically biased selection: although
$\boldsymbol{\theta}$ is not explicitly tracked, resampling from the updated
encoder Gibbs measure favors high-volume features that remain useful for
reconstruction, rather than latent directions the decoder can ignore.

\subsection{Decoder free-energy gradient and entropic selection\label{app:decoder_free_energy_gradient}}

Here we derive the decoder free-energy identity used in
Section~\ref{subsec:decoder_entropic_selection}. For fixed inverse temperature
$\beta>0$, define the encoder partition function
\begin{equation}
Z(\vartheta,\beta;Y)
=
\int_{\mathbb R^{n_\phi}}
\exp\!\left\{-\beta L_{\mathrm{rec}}(\phi,\vartheta)\right\}\,d\phi ,
\label{eq:app_decoder_partition}
\end{equation}
and the corresponding decoder free energy
\begin{equation}
F_{\mathrm{dec}}(\vartheta;Y)
=
-\frac{1}{\beta}\log Z(\vartheta,\beta;Y).
\label{eq:app_decoder_free_energy}
\end{equation}
This free energy is low when a large volume of encoder parameters achieves low
reconstruction loss under the decoder $D_\vartheta$.

\begin{proposition}[Decoder free-energy gradient]
\label{prop:decoder_free_energy_gradient}
Assume that $L_{\mathrm{rec}}(\phi,\vartheta)$ is differentiable in
$\vartheta$ and that differentiation under the integral sign is valid. Then
\begin{equation}
\nabla_\vartheta F_{\mathrm{dec}}(\vartheta;Y)
=
\mathbb E_{p_\vartheta(\phi\mid Y)}
\!\left[
\nabla_\vartheta L_{\mathrm{rec}}(\phi,\vartheta)
\right],
\label{eq:app_decoder_grad_phi}
\end{equation}
where
\begin{equation}
p_\vartheta(\phi\mid Y)
=
\frac{1}{Z(\vartheta,\beta;Y)}
\exp\!\left\{-\beta L_{\mathrm{rec}}(\phi,\vartheta)\right\}
\label{eq:app_encoder_gibbs}
\end{equation}
is the encoder Gibbs measure.
\end{proposition}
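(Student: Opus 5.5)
The plan is to differentiate $F_{\mathrm{dec}}=-\beta^{-1}\log Z$ directly, using the chain rule together with the assumed license to differentiate under the integral sign in \eqref{eq:app_decoder_partition}. Since $\beta>0$ and $Z(\vartheta,\beta;Y)>0$ (the integrand is strictly positive and, by the standing assumption that the Gibbs measure \eqref{eq:app_encoder_gibbs} is well defined, integrable), the logarithm is differentiable. Its gradient is
\[
\nabla_\vartheta F_{\mathrm{dec}}(\vartheta;Y)=-\frac{1}{\beta}\,\frac{\nabla_\vartheta Z(\vartheta,\beta;Y)}{Z(\vartheta,\beta;Y)} .
\]

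Next, I will compute $\nabla_\vartheta Z$ by moving the gradient inside the integral and applying the chain rule to the exponential:
\[
\nabla_\vartheta Z(\vartheta,\beta;Y)=\int_{\mathbb R^{n_\phi}}\left(-\beta\,\nabla_\vartheta L_{\mathrm{rec}}(\phi,\vartheta)\right)\exp\!\left\{-\beta L_{\mathrm{rec}}(\phi,\vartheta)\right\}d\phi .
\]
Substituting this into the previous display, the factors $-\beta$ and $-1/\beta$ cancel. Dividing by $Z$ then turns $\exp\{-\beta L_{\mathrm{rec}}\}/Z$ into $p_\vartheta(\phi\mid Y)$ from \eqref{eq:app_encoder_gibbs}. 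What remains is $\int \nabla_\vartheta L_{\mathrm{rec}}(\phi,\vartheta)\,p_\vartheta(\phi\mid Y)\,d\phi$, which is exactly the expectation in \eqref{eq:app_decoder_grad_phi}.

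The only genuinely delicate step is interchanging $\nabla_\vartheta$ with $\int d\phi$, which also guarantees that the resulting expectation is finite. The proposition assumes this interchange outright, so the proof will simply cite the hypothesis. I would nonetheless add a remark giving a sufficient condition. The condition is that, for $\vartheta$ in a neighborhood, $\|\nabla_\vartheta L_{\mathrm{rec}}(\phi,\vartheta)\|\,e^{-\beta L_{\mathrm{rec}}(\phi,\vartheta)}$ is dominated by an integrable function of $\phi$. Dominated convergence then justifies the interchange, and this is the same condition needed for the expectation to exist.

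Finally, I will note that the identity holds componentwise in $\vartheta$. It is the standard statement that the derivative of a log-partition function is an ensemble average, and it is what justifies the averaged gradient $\bar g_\vartheta$ in Algorithm~\ref{alg:eae_training} as an estimator of $\nabla_\vartheta F_{\mathrm{dec}}$.
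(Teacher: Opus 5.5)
Your proof is correct and follows the paper's argument step for step: write $\nabla_\vartheta F_{\mathrm{dec}}=-\beta^{-1}Z^{-1}\nabla_\vartheta Z$, pass the gradient under the integral, cancel the factors of $\beta$, and recognize the Gibbs density $p_\vartheta(\phi\mid Y)$. Your dominated-convergence remark is a valid sufficient condition for the interchange, which the paper simply assumes.
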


\begin{proof}
Starting from Eq.~\eqref{eq:app_decoder_free_energy},
\begin{align}
\nabla_\vartheta F_{\mathrm{dec}}(\vartheta;Y)
&=
-\frac{1}{\beta}
\frac{1}{Z(\vartheta,\beta;Y)}
\nabla_\vartheta Z(\vartheta,\beta;Y)
\nonumber\\
&=
-\frac{1}{\beta}
\frac{1}{Z(\vartheta,\beta;Y)}
\int_{\mathbb R^{n_\phi}}
\nabla_\vartheta
\exp\!\left\{-\beta L_{\mathrm{rec}}(\phi,\vartheta)\right\}
\,d\phi
\nonumber\\
&=
\frac{1}{Z(\vartheta,\beta;Y)}
\int_{\mathbb R^{n_\phi}}
\nabla_\vartheta L_{\mathrm{rec}}(\phi,\vartheta)
\exp\!\left\{-\beta L_{\mathrm{rec}}(\phi,\vartheta)\right\}
\,d\phi .
\end{align}
Using the definition of $p_\vartheta(\phi\mid Y)$ in
Eq.~\eqref{eq:app_encoder_gibbs} gives Eq.~\eqref{eq:app_decoder_grad_phi}.
\end{proof}

\paragraph{Entropic-selection interpretation.}
The partition function in Eq.~\eqref{eq:app_decoder_partition} integrates
$\exp\{-\beta L_{\mathrm{rec}}(\phi,\vartheta)\}$ over encoder-parameter space.
Therefore, $Z(\vartheta,\beta;Y)$ is large when the decoder admits a broad
family of encoders with low reconstruction loss. Minimizing
$F_{\mathrm{dec}}(\vartheta;Y)$ consequently favors decoders whose reconstruction
performance is supported by high-volume regions of encoder space.

\subsection{Simmering sampler for the encoder ensemble\label{app:eae_simmering}}

For fixed decoder parameters $\vartheta_k$, EAE requires samples from the encoder
Gibbs measure
\[
p_{\vartheta_k}(\phi\mid Y)
\propto
\exp\!\left\{-\beta L_{\mathrm{rec}}(\phi,\vartheta_k)\right\}.
\]
We obtain these samples using Simmering~\citep{babayanSufficient2025}, a
finite-temperature molecular-dynamics sampler. In this construction, encoder
parameters $\phi$ are treated as particle positions, and the reconstruction loss
$L_{\mathrm{rec}}(\phi,\vartheta_k)$ acts as the potential energy. The force on
the encoder parameters is therefore
\[
-\nabla_\phi L_{\mathrm{rec}}(\phi,\vartheta_k),
\]
which is computed by backpropagation. Rather than descending to a single
optimizer-selected encoder, Simmering couples the encoder dynamics to a
thermostat so that the trajectory samples an approximate canonical ensemble of
near-optimal encoders.

\paragraph{Augmented phase space.}
We introduce auxiliary momenta $r_\phi\in\mathbb R^{n_\phi}$ and a positive mass
matrix $M_\phi$. Up to a normalization constant, the encoder partition function
can be lifted to phase space as
\[
Z(\vartheta_k,\beta;Y)
\propto
\int d\phi\,dr_\phi\,
\exp\!\left\{
-\beta \mathcal H(\phi,r_\phi;\vartheta_k,Y)
\right\},
\]
with Hamiltonian
\[
\mathcal H(\phi,r_\phi;\vartheta_k,Y)
=
L_{\mathrm{rec}}(\phi,\vartheta_k)
+
\frac{1}{2}r_\phi^\top M_\phi^{-1}r_\phi .
\]
Integrating out $r_\phi$ recovers the original Gibbs measure over $\phi$, so the
auxiliary momenta do not change the target encoder distribution; they only
provide dynamics for sampling it.

\paragraph{Sampling.}
In practice, Simmering uses a Nos\'e--Hoover chain thermostat and a symplectic
integration scheme to maintain the target temperature and sample the canonical
parameter ensemble. We do not repeat the discretized integration algorithm here;
the full sampler, including equilibration and numerical integration details, is
given in \citet{babayanSufficient2025}. The implementation choices needed to
reproduce our EAE experiments are reported in Appendix~\ref{app:exp_setup}.

During sampling, encoder configurations $\{\phi^{(i)}\}_{i=1}^M$ are collected
along the trajectory and mapped through the encoder to obtain latent codes
\[
z_n^{(i)}=E_{\phi^{(i)}}(y_n),
\qquad
i=1,\ldots,M .
\]
The resulting empirical encoder ensemble provides the decoder-gradient samples
used in Algorithm~\ref{alg:eae_training}.

\subsection{Sampling and generation from a trained EAE}
\label{app:eae_sampling}
\begin{algorithm}[H]
\caption{Sampling from a trained Entropic Autoencoder}
\label{alg:eae_sampling}
\begin{algorithmic}[1]
\Require data or query set $Y_{\mathrm{eval}}=\{y_n\}_{n=1}^{N_{\mathrm{eval}}}$
\Require trained decoder parameters $\vartheta^\star$
\Require encoder ensemble $\mathcal E=\{\phi^{(i)}\}_{i=1}^M$
\State $\mathcal Z\gets\emptyset$
\For{$i=1,\ldots,M$}
    \State Fix encoder parameters $\phi^{(i)}\in\mathcal E$
    \For{$n=1,\ldots,N_{\mathrm{eval}}$}
        \State Compute latent sample $z_n^{(i)}\gets E_{\phi^{(i)}}(y_n)$
        \State Store $\mathcal Z\gets\mathcal Z\cup\{z_n^{(i)}\}$
    \EndFor
\EndFor
\State \Return latent ensemble $\mathcal Z$
\end{algorithmic}
\end{algorithm}

\subsection{Experiment Setup}
\label{app:exp_setup}
All experiments were performed using the TensorFlow library\citep{tf} (released under an Apache 2.0 license)  in Python, using one Nvidia GeForce RTX 4060 GPU. Unless otherwise indicated, the standard TensorFlow implementation and default parameters were used for a given function or algorithm. For sampling the canonical ensemble, we used the Simmering implementation by Ref. \citep{babayanSufficient2025} published on Zenodo \citep{SimmeringCode} (released under a Creative Commons 4.0 Attribution Internal license). Any hyperparameters not mentioned below can be assumed to be set to the default values of the open-source Simmering implementation.

\subsubsection{Setup: Recovery of meaningful low-dimensional representations with an EAE \label{sec:appendix_exp1}}

This experimental setup description pertains to experiments presented in Section \ref{sec:exp_1}. The problem setup of Section \ref{sec:exp_1} is based on the work of Ref. \citep{championDatadriven2019}. The modelling task aims to reproduce the time evolution of a lambda-omega reaction-diffusion system, in which a spiral wave forms. Lambda-omega reaction-diffusion systems are a model for oscillation dynamics near a Hopf bifurcation\citep{murrayMathBio1990}, and thus can be described in a lower-dimensional form characterizing the ``orbit'' around the limit cycle. This problem setup, with a known lower dimensional form, allows us to assess the meaningfulness of the latent-space representations identified by an EAE. The data consists of time snapshots of a wave function and its time derivative in two spatial dimensions ($100\times 100$ dimensional time snapshots for both the wave and its time derivative), and was generated via an open-source numerical integration code provided by the authors of Ref.\citep{championDatadriven2019} on GitHub. Three data samples, at timesteps $0,5000,10000$ are shown in Fig. \ref{fig:toy_model}a. 4000 constant-time snapshots, randomly sampled from the first 9000 time snapshots of the original dataset, were used for training, and another distinct 1000 samples, taken randomly from the first 9000 time snapshots, were used as the validation set. The final 1000 time snapshots were used as the test set.

The goal of training the EAE is to generate accurate latent variable samples that inform the identification of basis function coefficients in a differential, i.e., governing, equation describing the lower-dimensional dynamics. As in Ref. \citep{championDatadriven2019}, we model the latent variable time dynamics as
\begin{equation}
    \dot{\mathbf{z}}=\Theta(\mathbf{z})\Xi, \label{eq:sindy_latent_dyn}
\end{equation} 
where $\Theta(\mathbf{z})=[f_1(\mathbf{z})\dots  f_p(\mathbf{z})]$ is a matrix containing each of the candidate basis functions $\{ f_i\}$ evaluated at $\mathbf{z}$, and $\Xi$ is a matrix of unknown coefficients for each candidate basis function and each latent dimension, e.g., if there are 12 basis function candidates and 2 latent dimensions, $\Xi$ is a $12\times2$ matrix. We use the same set of basis functions as the corresponding reaction-diffusion example in \citep{championDatadriven2019} but remove the constant term as we do not specify the initial conditions for the latent variables in the objective function. The objective function is also based on the SINDy autoencoder objective function in \citep{championDatadriven2019} but with two key differences: we remove the ``SINDy regularization'' term (applying an $L_1$ norm regularization on basis coefficients), and we sample $\Xi$ over the encoder ensemble, rather using than optimizing $\Xi$ in tandem with the autoencoder parameters. These changes transform the original problem setup into one that trains only the autoencoder parameters with an objective only containing reconstruction loss terms, in line with the EAE training framework.

Our loss function is thus given by
\begin{equation}
    \mathcal{L} = \lambda_{1}\; \| \mathbf{x} - \mathbf{\hat{x}} \|_2^2 + \lambda_{2} \left\| \dot{\mathbf{x}} - \hat{\dot{\mathbf{x}}} \right\|_2^2, \label{eq:sindy_loss}
\end{equation}
where $\mathbf{x},\mathbf{\dot{x}}$ denote the two quantities to be reconstructed (the wave function and its time derivative, denoted henceforth with a dot), and $\lambda_{1},\lambda_{2}$ are hyperparameters scale the relative magnitudes of the two reconstruction terms. 
We compute $\hat{\dot{x}}$ in Eq. \ref{eq:sindy_loss} via Eq. \ref{eq:sindy_latent_dyn} and chain rule
\begin{equation}
    \mathbf{\hat{\dot{x}}}=\frac{d\mathbf{\hat{x}}}{dt}=\frac{d\mathbf{\hat{x}}}{d\mathbf{\hat{z}}}\frac{d\mathbf{\hat{z}}}{dt}=\frac{d\mathbf{\hat{x}}}{d\mathbf{\hat{z}}}\Theta(\mathbf{\hat{z}})\Xi, \label{eq:sindy_dxdt}
\end{equation}
where $\mathbf{\hat{x}},\mathbf{\hat{z}}$ denote the reconstruction of $\mathbf{x}$ and the encoding of $\mathbf{x}$ respectively. We sample $\Xi$ using $\mathbf{\hat{z}}$ by solving for $\Xi$ using the ``Sindy loss in $\dot{z}$'' term in the original problem formulation,
\begin{equation}
    \Xi = (\Theta(\mathbf{\hat{z}})^T\Theta(\mathbf{\hat{z}}))^{-1}\Theta(\mathbf{\hat{z}})^T\frac{d\mathbf{\hat{z}}}{d\mathbf{x}}\frac{d\mathbf{x}}{dt}, \label{eq:sindy_xi_def}
\end{equation}
where $\frac{d\mathbf{x}}{dt}=\mathbf{\dot{x}}$ is the known data describing the wave's time derivative. In each iteration of training, the reconstruction loss computation consists of (1) encoding the latent variable $\mathbf{\hat{z}}$ and predicting the reconstruction $\mathbf{\hat{x}}$ using the autoencoder, (2) solving for $\Xi$ using Eq. \ref{eq:sindy_xi_def}, (3) solving for $\mathbf{\dot{\hat{x}}}$ using Eq. \ref{eq:sindy_dxdt}, and (4) evaluating the loss (Eq. \ref{eq:sindy_loss}).

The autoencoder architecture consisted of an encoder with one hidden layer with 256 hidden units and an exponential linear unit activation, and a linear output layer producing 2-dimensional latent variables. The decoder has a hidden layer with 256 units and an exponential linear unit, and an output layer expanding the outputs back to the input dimension ($10^4$). The model was trained for 100 decoder update iterations, with 200 encoder update iterations per decoder update. The decoder gradients were applied with the Adam optimizer, with a learning rate of $10^{-3}$. The loss weights in the reconstruction loss (Eq. \ref{eq:sindy_loss}) were set as $\lambda_1=1.0, \lambda_2=20.0$. The thermostat parameters used for encoder ensemble sampling were: thermostat initial and final temperature $T=10^{-4}$, real particle mass $m=10^{-6}$, learning rate/time step size $10^{-5}$. To improve convergence speed, based on Ref.\citep{hmc}, velocities were initialized via sampling from a Gaussian distribution with a mean of $0$ and a variance $T/m$, where $T$ and $m$ are the thermostat temperature and real particle mass respectively, and reset by resampling from this Gaussian distribution every 5 encoder iterations. 

After optimizing the decoder parameters, 5000 consecutive encoder ensemble samples were collected with the decoder parameters fixed, and with the velocity randomization still in effect. These sampled encoder parameters were used to estimate $\Xi$ via Eq. \ref{eq:sindy_xi_def} on test data. We computed the expectation of each estimated coefficient, and considered any terms with a magnitude above $0.1$ as significant (in accordance with the original work, Ref. \citep{championDatadriven2019}), presented in Table \ref{tab:appendix_coeffs}. For complex, multimodal distributions, expectation values are not the most descriptive measures, so we also included the dominant mode (determined by kernel density estimation usig the \texttt{scipy.stats.gaussian\textunderscore kde} library). Subsequent analysis only concerns basis coefficients for which both the expectation and mode were above the threshold. The only coefficients above the significance threshold (highlighted in Table \ref{tab:appendix_coeffs}) were the linear ($z_1$, and $z_2$ terms) and the sine terms ($\sin z_1$, $\sin z_2$). Appropriate combinations of these basis functions correspond to linear or non-linear oscillation dynamics. Analysis (Fig. \ref{fig:appendix_coeff_corr}) of coefficient correlations show that, beyond displaying the expected strong correlation between coefficient combinations corresponding to descriptions of linear and non-linear ellipsoid orbits, the pattern of correlation between basis coefficients shows that the model exhibits coordinate system (the relative roles of $z_1$, $z_2$ in dynamics) ambiguity. We chose to limit our visualization of this coordinate ambiguity, i.e., superposition of admissible dynamics, in Fig. \ref{fig:toy_model}d to the strongly correlated linear basis coefficient terms for simplicity. We fit the most co-occurring coefficient combinations identified in Fig. \ref{fig:toy_model}d with kernel density estimation to determine each pair's corresponding equations of motion. We then used our trained autoencoder to generate an ``initial condition'' for the latent variables $z_1,z_2$ by computing the encoding of the earliest-time test data sample. We then numerically integrated both identified equations of motion (with \texttt{scipy.integrate.odeint}) with the this initial condition to generate a visualization of the limit cycle orbit (Fig. \ref{fig:toy_model}e--f).   

\begin{figure}
    \centering
    \begin{minipage}[b]{0.45\textwidth}
    \centering
    \captionof{table}{Basis coefficients learned by the EAE corresponding to the training problem described in Appendix \ref{sec:appendix_exp1}. For each coefficient, both the average and the mode of the distributions learned by sampling over the ensemble of encoders after training are presented to account for potential multi-modality. Coefficients above the significance threshold (0.1) are bolded. An analysis of the relationship between the linear coefficients $z_1$,$z_2$ is presented in Fig. \ref{fig:toy_model}. }
    \begin{tabular}{|c|c|c|c|c|}
    \hline
     &   \multicolumn{2}{c}{$\dot{z}_1$ coefficient} & \multicolumn{2}{|c|}{$\dot{z}_2$ coefficient} \\
     \cline{2-5}
     &  Avg. & Mode & Avg. & Mode \\
     \hline
     \bm{$z_1$} & 0.47 & 0.48 & -1.98 & -1.98 \\
     \bm{$z_2$} & 1.00 & 1.17 & -1.37 & -0.38 \\
     $z^2_1$ & 0.00 & 0.00 & 0.01 & 0.00 \\
     $z_1z_2$ & 0.00  & 0.00 & -0.03 & -0.03\\
     $z^2_2$ & -0.03 & -0.01 & -0.03 & -0.01 \\
     $z^3_1 $ & 0.00 & -0.01 & -0.05 & -0.07  \\
     $z^2_1z_2$ & -0.02 & -0.03 & 0.09 & -0.03  \\
     $z_1z^2_2$ & -0.03 & -0.03 & 0.14 & -0.03  \\
     $z^3_2$& 0.03 & 0.02 & 0.09 & 0.02 \\
     \bm{$\mathbf{\sin}(z_1)$} & 0.36 & 0.31 & 0.13 &  0.31  \\
     \bm{$\mathbf{\sin}(z_2)$} & 0.86 & 0.81  & 0.11 & 0.85 \\
     \hline
    \end{tabular}
    \label{tab:appendix_coeffs}
    \end{minipage}
    \hfill
    \begin{minipage}[b]{0.45\textwidth}
    \centering
    \includegraphics[width=\linewidth]{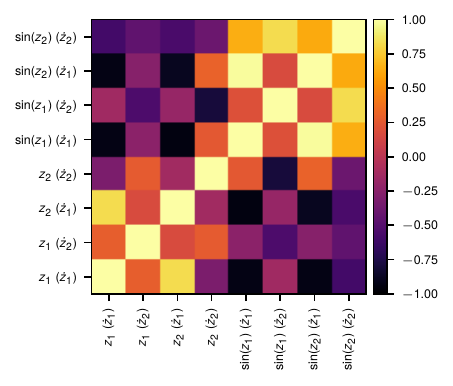}
    \caption{Corresponding to coefficient distributions learned by the EAE, and tabulated in Table \ref{tab:appendix_coeffs}, we compare the Pearson product-moment correlation coefficients (magnitude reflected by colorbar) between the most significant basis coefficients (bolded in Table \ref{tab:appendix_coeffs}). Correlations between corresponding linear and non-linear terms, as well as between different linear terms, suggest  a superposition of learned dynamics. }
    \label{fig:appendix_coeff_corr}
    \end{minipage}
    
    \caption*{Analysis of basis coefficients learned by the EAE, whose training setup is described in Appendix \ref{sec:appendix_exp1}.}
    \label{fig:toy_model_combo}
\end{figure}

\subsubsection{Setup: Comparison of latent variable distributions in image reconstruction tasks
\label{sec:appendix_exp2}}

This experimental setup description pertains to experiments presented in Section \ref{sec:exp_2}.

In both problem settings, the VAE was trained via marginal log-likelihood evidence lower bound (ELBO) maximization by optimizing the single sample Monte Carlo estimate of the standard VAE loss, i.e., where the strength of prior matching $\lambda=1$.\citep{higginsBetaVAE2017} In this standard VAE training protocol, each latent dimension's a priori distribution is assumed to be a standard normal distribution, and the approximate latent variable posterior distributions are also Gaussian. The reconstruction error component of the ELBO is computed via binary crossentropy between the true pixel values and the decoder outputs (with a sigmoid applied on the latter). 
To quantify reconstruction capability, we calculated the mean-squared error between test images and their reconstructions. To quantitatively assess the extent of posterior collapse, we calculated the number of ``active latent dimensions'' by counting the number of latent dimensions with a latent variable variance (variance of each latent dimension's mean for the VAE, and latent variable variance otherwise) above a threshold of 0.01 \citep{burdaImportance2015}.

For all MNIST experiments, the standard train/test split was used (60,000 training samples, 10,000 testing samples), with 10,000 samples reserved from the training set for validation. The data was sourced from the TensorFlow datasets repository, but originates from Ref. \citep{lecunMNIST2010} (Creative Commons Attribution-Share Alike 3.0 license). All image pixels were rescaled to the range $[0,1]$ before training. All models had a latent dimension of 64 units, and the same decoder architecture: $[64, 784]$ units, with a ReLU activation function on the non-output layer. The encoder architecture was almost identical for all three setups tested ($[256, 128, 128, 64/128]$ units, and all non-output layers used a ReLU activation function), except for the VAE latent dimensionality being double that of the the vanilla autoencoder and EAE, as it learns a mean and a log-variance per latent dimension rather than a single value per latent dimension. All optimized model components (vanilla autoencoder and VAE parameters, and EAE decoder) were trained with TensorFlow's Adam optimizer implementation, with a learning rate of 0.001, and a batch size of 32. The VAE was trained for 70 epochs, and the vanilla AE was trained for 140 epochs. The EAE was trained for 250 decoder iterations, with 2 epochs (3126 iterations) of encoder ensemble-induced free-energy gradient samples per decoder parameter update. For canonical ensemble sampling, we used the following thermostat hyperparameters: learning rate of $10^{-5}$, thermostat temperature $T=10^{-8}$, real particle mass $m=10^{-6}$, and Nos\'e-Hoover chain length of 30.

To generate the linear interpolation visualization shown in Fig. \ref{fig:mnist_compar}a, we first computed each model's average encoding across all test images corresponding to a particular label, i.e., digit $\bar{\mathbf{z}}_{\text{digit}}$. For the EAE, this average is taken over ensemble members per digit sample, then over all digit samples. For the VAE, the sampled reparameterization corresponding to a given image's mean and log-variance was averaged across all images of a particular digit. For the vanilla autoencoder, the encoder outputs were averaged across all images within a digit category. Then, to interpolate, we computed a linear interpolation between each model's digit encodings as
\begin{equation}
    \mathbf{z}_{\text{interpol.}}=\alpha \bar{\mathbf{z}}_{\text{digit1}} + (1-\alpha) \bar{\mathbf{z}}_{\text{digit2}},
\end{equation}
where $\alpha$ scales the interpolation between two digits. This interpolation $\mathbf{z}_{\text{interpol.}}$ was provided as an input to each model's decoder to generate the interpolated images. The average across digit images is provided for comparison (labelled ``Test Data'' in Fig. \ref{fig:mnist_compar}a).

To compute the latent variable distributions presented in Fig. \ref{fig:mnist_compar}b, we chose three random latent space dimensions that all had ``active'' latent units, and plotted the latent variable distribution (either the latent variables themselves, or the encoded mean for the VAE) across the test set, separated by digit category. The measured activity of the presented latent dimension is provided in the top right corner of each subplot, denoted $A_n$. The category-distinguishing latent distribution characteristic of the EAE was consistent across other randomly sampled dimensions that were active across all models, as shown in Fig. \ref{fig:appendix_mnist_vis}. 

\begin{figure}
    \centering
    \includegraphics[width=\linewidth]{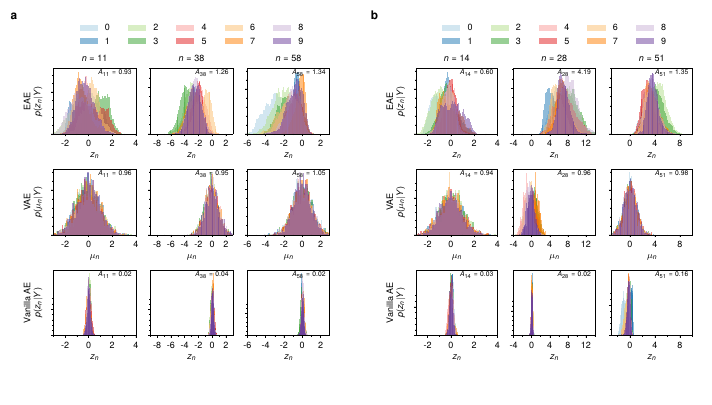}
    \caption{In addition to Fig. \ref{fig:mnist_compar}b, we include six more randomly chosen non-collapsed dimensions (panels a,b) from all three autoencoders to highlight the EAE's capacity to learn multimodal, as well as uniquely category-faithful, latent representations. In dimension $n=28$ (panel b), the VAE exhibits some non-overlapping distribution features, as does the vanilla AE for $n=51$ (panel b), but the distributions for these two models' latent variables still predominantly overlap for different digits.}
    \label{fig:appendix_mnist_vis}
\end{figure}

The Frey Faces dataset was sourced from Ref. \citep{FreyFace}. The data was originally provided to the database host by Dr. Brendan Frey for academic use, which aligns with the use case in this work. Pixels were mapped to the $[0,1]$ range before training. We used 80\% of the data for training, 10\% for validation and 10\% for testing. The data were partitioned randomly. We used the same neural network architecture as for the MNIST training setup, save for the encoder input and decoder output dimensions, which were condensed to $28\times20$ (from $28\times28$). As for MNIST, we used the Adam optimizer to apply the EAE decoder gradient updates, and to train the VAE and vanilla AE, with a learning rate of $10^{-3}$. All three models were trained without batching (since the full dataset fit on the GPU). The VAE was trained for $10^{5}$ epochs, and the vanilla AE was trained for $1.1\times 10^4$ epochs.

To train the EAE, we updated the decoder 1500 times, with 1000 encoder samples contributing to each decoder update. For sampling the ensemble of encoders during training, we used the same thermostat parameters as for the MNIST training problem, but with a thermostat temperature of $3\times10^{-8}$. After training, we collected 1000 consecutively sampled encoders with fixed, optimized decoder parameters for computing the latent variable activity.

We chose to omit interpolation and latent variable analysis for the Frey Faces dataset as the data are time-ordered samples of a person's face movement, and thus both lack straightforward underlying structure to study (such as category distinction) and simplify the task of interpolation (since the data are themselves provide examples of an interpolation of face movement).

\subsubsection{Setup: EAE Dynamics and Reconstruction Morphology \label{sec:appendix_exp3}}

This experimental setup description pertains to experiments presented in Section \ref{sec:exp_3}. 

For these experiments, we trained two EAEs on image reconstruction using the CelebA dataset \citep{celebA} (released for non-commercial research purposes only, which align with the use case in this work). For both EAEs, we used the standard train/val/test split, and for pre-processing, we center-cropped the images to a $64\times64\times3$ resolution, and rescaled the pixels to the $[0,1]$ range. For both training instances, we used an encoder architecture with 4 2D convolutional layers (32, 64, 128 and 256 filters in each layer), with each layer learning a $(4\times4)$ kernel, and using a stride of 2, ``same''-type padding (in TensorFlow), and a reLU activation function. After the four convolutional layers, the input was flattened and fed into a densely connected layer with 256 units and a linear activatio function, resulting in a 256-dimensional latent. This latent input was fed into a decoder with a 1024-unit densely connected layer (with a ReLU activation), after which the input was reshaped to dimensions $(4,4,64)$ and fed into a sequence of three transpose 2D convolutional layers with 64, 32 and 3 filters respectively. Each transpose 2D convolutional layer had a $4\times 4$ kernel, zero padding and a reLU activation function applied. The inner two transpose convolutional layers had strides of 4, whereas the outer layer had a stride of 1. We applied the estimated decoder gradients using the Adam optimizer with a learning rate of $10^{-3}$, and trained both the encoder and decoder with a batch size of 32.

To train the higher-temperature model, we updated the decoder parameters 700 times, with 5087 encoders sampled (each with a different data batch, over the course of one epoch) per decoder update. The thermostat parameters used for encoder sampling were: thermostat temperature $T=3\times10^{-8}$, real particle mass $m=10^{-6}$ and learning rate $10^{-5}$.

To train the lower-temperature model, we updated the decoder parameters 700 times, with 5087 encoders sampled (each with a different data batch, over the course of one epoch) per decoder update. The thermostat parameters used for encoder sampling were: thermostat temperature $T=1.5\times10^{-9}$, real particle mass $m=10^{-8}$ and learning rate $10^{-5}$.

To sample the images shown in Fig. \ref{fig:celebA}b--c, we sampled 1000 of the corresponding model's final decoder update iteration's encoder ensemble encodings for randomly chosen images from the test set, and used the optimized decoder to reconstruct the image from the ensemble-averaged encoding.

\subsection{Limitations, Broader Impacts and Future Work \label{app:limits_futurework}}

\subsubsection{Limitations \label{app:limits}}








The theoretical background of EAE relies on standard assumptions from
computational statistical mechanics, and in particular, molecular dynamics. For
fixed decoder parameters, we assume that the Simmering trajectory is sufficiently
equilibrated and ergodic so that finite-time samples provide an informative
approximation to the canonical ensemble of encoders. We also assume that the
integration step size is chosen so that the symplectic integrator accurately
approximates the continuous-time dynamics. The collective-variable interpretation
further assumes that the first-cumulant approximation captures the dominant
coarse-grained contribution to the free energy. Violations of these assumptions
may weaken the connection between the theoretical interpretation and practical
implementations, for example, when finite trajectories mix poorly, or higher-order fluctuations within iso-collective-variable sets are important.

The main hyperparameter that affects EAE training is the choice of temperature $T$. Since any particular dataset-parameterization combination forms a distinct loss landscape, each problem setup will exhibit different sensitivity to temperature. Thus, there is a training time cost associated with tuning this hyperparameter. Moreover, given a range of temperatures that produce ``sufficiently accurate'' EAEs, the unique role of temperature in training, where it modulates the extent to which reconstruction loss minimization is prioritized in the EAE, requires the user to make a subjective choice about what degree of accuracy is appropriate for the use case. This property of the training process introduces an additional model design decision that differs from other hyperparameter choices, but can ultimately allow for more control over model outcomes. 

The proposed training framework scales with both model dimensionality and ensemble encoder size. The backbone of our EAE implementation is the encoder ensemble sampling method (Simmering, described in Appendix \ref{app:eae_simmering}), which, according to Ref. \citep{babayanSufficient2025} scales similarly to a comparable first-order optimization algorithm, like Adam, with model size. The appropriate ensemble encoder size, i.e., how many encoder samples are used to update the decoder parameters, must be chosen based on achieved prediction variance, or outcome alignment with accuracy targets, and thus differs from problem to problem. However, serendipitous model architecture and temperature choices have resulted in faster convergence (in epochs) with Simmering (in its original implementation, in Ref. \citep{babayanSufficient2025}) than is achieveable with optimization alone, so it is possible that an analogous effect could be observed in EAE training given an appropriate model architecture, temperature and ensemble size choice. Otherwise, the EAE is as sensitive to dataset size or data feature dimensionality as other neural networks are: highly descriptive datasets benefit training convergence and accuracy, and high-dimensional data necessitates larger architectures that take longer to train. Overall, the EAE framework can incur additional computational overhead from the encoder ensemble sampling process, but we note that all experiments were generated on a consumer-grade computer and thus the proposed framework is not generally computationally expensive to the point of being impractical. 

High-dimensional data are inherent in dimensionality reduction tasks, and data batching is often employed to allow data to fit in GPU memory for GPU-accelerated training. Batching introduces an additional, anisotropic and problem-dependent noise source in the gradient, superimposed over any pre-existing data noise. The anisotropy of batching noise can complicate temperature selection by creating a ``noise floor'' in loss below which loss fluctuations (typically produced by the thermostat during encoder ensemble sampling) are instead batching-noise dominated. To facilitate straightforward temperature selection, batch sizes should be as large as possible (to reduce the presence of batching noise), subject to computer memory restrictions, or thermostat temperature ought to be increased if batch size is reduced.

\subsubsection{Broader Impact and Future Work \label{app:broader_impact}}

The broader impacts of improved generative model performance are multifaceted. First, it is prudent to acknowledge outright that improving the potential performance of any generative model risks facilitating more complex or harmful instances of misuse. However, in the particular case of this work, we believe the issue of posterior collapse as a consistent failure mode in VAEs is troubling and worth mitigating because VAEs underpin more sophisticated generative modelling methods such as diffusion models and are thus ubiquitous in the current consumer-accessible generative model landscape. From a fairness-sensitive perspective, posterior collapse can exacerbate bias propagation in models, since the model can only capture the most dominant trends in the data, disproportionately omitting features assocaited with underrepresented data populations. In contrast, the more comprehensive data representations and flexible insight into learned latent structure offered by the EAE can be used to identify less-dominant modes in generative behaviour and guide debiasing or fairness-promoting machine learning processes. This application of EAEs can mitigate the potential risks of enabling improved generative model performance and is the subject of current, ongoing work.

More generally, the method introduced in this work (Section \ref{sec:eae}), and associated experiments (Sections \ref{sec:exp_1}--\ref{sec:exp_3}) collectively present one implementation of an EAE training framework. Though we employed a particular thermostat (see Section \ref{app:eae_simmering}) for encoder ensemble sampling, thermostat-inspired approaches from other contexts, e.g., Langevin sampling, could offer performance benefits in EAE training that were not explored in this work. In addition, all experiments presented in this work employed a fixed-temperature training regime, but the success of annealing temperature-like parameters (like KL annealing) suggests that there may exist more efficient decoder parameter search strategies that employ a temperature schedule.

\end{document}